\theoremstyle{definition}
\newtheorem{definition}{Definition}[section]
\newtheorem{lemma}{Lemma}[section]
\newtheorem{assumption}{Assumption}[section]
\newtheorem{proposition}{Proposition}[section]
\newcommand{\ie}{\textit{i}.\textit{e}.}
\newcommand{\eg}{\textit{e}.\textit{g}.}
\newcommand{\xspace}{\mathcal{X}}
\newcommand{\yspace}{\mathcal{Y}}
\newcommand{\aspace}{\mathcal{A}}
\newcommand{\zspace}{\mathcal{Z}}
\newcommand{\dkl}{D_{\text{KL}}}
\title{Conditional Supervised Contrastive Learning for Fair Text Classification}
\author{Jianfeng Chi \\ University of Virginia \\ \texttt{jc6ub@virginia.edu} \And
        William Shand \\ University of Virginia \\ \texttt{wss2ec@virginia.edu} \And
        Yaodong Yu \\ UC Berkeley \\ \texttt{yyu@eecs.berkeley.edu}
        \AND
        Kai-Wei Chang \\ UCLA \\ \texttt{kwchang@cs.ucla.edu} \And
        Han Zhao \\ UIUC \\ \texttt{hanzhao@illinois.edu}
        \And
        Yuan Tian \\ UCLA \\ \texttt{yuant@ucla.edu}
}
\begin{document}
\maketitle

\begin{abstract}
Contrastive representation learning has gained much attention due to its superior performance in learning representations from both image and sequential data. However, the learned representations could potentially lead to performance disparities in downstream tasks, such as increased silencing of underrepresented groups in toxicity comment classification. In light of this challenge, in this work, we study learning fair representations that satisfy a notion of fairness known as equalized odds for text classification via contrastive learning. Specifically, we first theoretically analyze the connections between learning representations with a fairness constraint and \emph{conditional supervised contrastive objectives}, and then propose to use conditional supervised contrastive objectives to learn fair representations for text classification. We conduct experiments on two text datasets to demonstrate the effectiveness of our approaches in balancing the trade-offs between task performance and bias mitigation among existing baselines for text classification. Furthermore, we also show that the proposed methods are stable in different hyperparameter settings.\footnote{Our code is publicly available at: \url{https://github.com/JFChi/CSCL4FTC}.}
\end{abstract}

\section{Introduction}
Recent progress in natural language processing (NLP) has led to its increasing use in various domains, such as machine translation, virtual assistants, and social media monitoring.
However, studies have demonstrated societal bias in existing NLP models~\citep{bolukbasi2016man,zhao-etal-2017-men,may2019measuring, bordia2019identifying, hutchinson2020social, webster2020measuring, de2021stereotype,sheng-etal-2021-societal}. In one major NLP application, text classification, bias is referred as the performance disparity of the trained classifiers over different demographic groups such as gender and ethnicity~\citep{sun-etal-2019-mitigating,weidinger2021ethical}. 
Such bias poses potential risks: for example, toxicity classification models in online social media platforms show disparate performance in different social groups, leading to increased silencing of under-served groups~\citep{dixon2018measuring, blodgett2020language}.

Meanwhile, an increasing line of work in contrastive learning (CL) has led to significant advances in representation learning~\citep{hadsell2006dimensionality, logeswaran2018efficient, he2020momentum, henaff2020data, pmlr-v119-chen20j, khosla2020supervised, gao2021simcse}. 
The general idea of contrastive learning in these works is to learn representations such that similar examples stay close to each other while dissimilar ones are far apart. 
Inspired by those works, recent works~\citep{shen2021contrastive, tsai2021conditional, tsai2022conditional} also propose to leverage contrastive learning to learn fair representations in classification. However, these works either lack theoretical justifications for the proposed approaches or adopt \emph{demographic parity}~\citep{dwork2012fairness} as the fairness criterion, which eliminates the perfect classifier in the common scenario when the \emph{base rates} differ among demographic groups~\citep{hardt2016equality, han2019inherent}.

In this work, we aim to mitigate bias in text classification models via contrastive learning. In particular, we adopt the fairness notion, \emph{equalized odds} (EO)~\citep{hardt2016equality}, which asks for equal true positive rates (TPRs) and false positive rates (FPRs) across different demographic groups~\citep{zhao2019conditional}. Based on information-theoretic concepts, we bridge the problem of learning fair representations with equalized odds constraint with contrastive learning objectives. We then propose an algorithm, called \emph{conditional supervised contrastive learning}, to learn fair text classifiers. 

Empirically, we conduct experiments on two text classification datasets (\eg, toxic comment classification and biography classification) to show the proposed methods (1) can flexibly tune the trade-offs between main task performance and the fairness constraint; (2) achieve the best trade-offs between main task performance and equalized odds compared to the existing bias mitigation approaches in text classification;  
(3) are stable to different hyperparameter settings, such as data augmentations, temperatures, and batch sizes. 
To the best of our knowledge, our work is the first to both theoretically and empirically study how to ensure the EO constraint via contrastive learning in text classification. 

\section{Background}
\label{sec:background}
We use $X\in\xspace$ and $Y\in\yspace$ to denote the random variables for the input text and the categorical label for the main task, respectively.
Furthermore, $A\in\aspace$ is the sensitive attribute (protected group) associated with the input text $X$ (\eg, the gender information in the occupation classification task). The corresponding lowercase letters denote the instantiation of the random variables.
Given a text encoder $f: \xspace \to \zspace$ (\eg, BERT~\citep{devlin-etal-2019-bert}) and a classifier $g: \zspace \to \yspace$, we first transform the input text $X$ into latent representation $Z$ via $f$, and $Z$ is used to give a prediction $\hat{Y}$ via $g$ (\ie, $X \xrightarrow{f} Z \xrightarrow{g} \hat{Y}$). 

In the context of contrastive learning, data augmentation strategies have been widely adopted.
Let $\mathcal{T}$ be a set of data augmentations and $X'$ be the augmented input given the data augmentation $t(\cdot)$: $X'= t(X),~t\sim \mathcal{T}$, where we assume that the augmentation $t$ is sampled uniformly at random from $\mathcal{T}$. Similarly, we have $X' \xrightarrow{f} Z' \xrightarrow{g} \hat{Y}'$. Let $H$ denote the entropy and $I$ denote the mutual information, \eg, $H(Z\mid Z', Y)$ is the conditional entropy of $Z$ given $Z'$ and $Y$, and $I(Z'; Z\mid Y)$ is the conditional mutual information of $Z'$ and $Z$ given $Y$. 
Due to the space limit, we refer readers to~\citet{cover1999elements} for more background knowledge of the related notions (entropy and mutual information) in information theory. 

We assume there is a joint distribution over $X$, $Y$, and $A$ from which the data are sampled. 
Figure~\ref{fig:graphical_model} shows the graphical model of the dependencies between input variables and outputs. 
We also assume that the sensitive attribute $A$ is available only during model training, but it is not available during the testing phase. As a result, any post-processing methods that leverage sensitive attributes for bias mitigation during the testing phase are not feasible in our setting. In this work, we use equalized odds, a more refined fairness criterion for classification problems.

\begin{definition}[Equalized Odds~\citep{hardt2016equality}]
A model satisfies equalized odds if $\hat{Y} \perp A \mid Y$.
\end{definition}

\begin{figure}[t!]
\centering
\begin{tikzpicture}[node distance = {15mm}, main/.style = {draw,circle},line width=0.3mm,->]
\node[main] (Y) {$Y$};
\node[main] (X) [below of=Y] {$X$};
\draw (Y) -- (X);

\node[main] (A) [left of=X] {$A$};
\draw (A) -- (Y);
\draw (A) -- (X);

\node[main] (X') [right of=X, xshift=1cm] {$X'$};
\draw (X) -- node [anchor=south] {$t \sim \mathcal{T}$} (X');

\node[main] (Z) [below of=X] {$Z$};
\draw (X) -- node [anchor=west] {$f$} (Z);

\node[main] (Yhat) [below of=Z] {$\hat{Y}$};
\draw (Z) -- node [anchor=west] {$g$} (Yhat);

\node[main] (Z') [below of=X'] {$Z'$};
\draw (X') -- node [anchor=west] {$f$} (Z');

\node[main] (Yhat') [below of=Z'] {$\hat{Y}'$};
\draw (Z') -- node [anchor=west] {$g$} (Yhat');
\end{tikzpicture}
\caption{
Graphical model of the dependencies between input variables and outputs. Note that we only assume there is a joint distribution over $X$, $Y$, and $A$ from which the data are sampled, so the figure only shows one case of the dependencies over $X$, $Y$, and $A$. 
}
\label{fig:graphical_model}
\end{figure}

At a high level, EO asks the model prediction to be independent of the sensitive attribute conditioned on the task label. If a model perfectly satisfies equalized odds, the differences of true positive rates and false positive rates across demographic groups will be 0. 
Equivalently, it also implies $I(\hat{Y}; A \mid Y)=0$. Consider online comment toxicity classification as a real-world example to motivate the use of EO as a notion of fairness. In this case, false positive cases (benign text comments marked as toxic) can be seen as unintentional censoring, and false negative cases (toxic text comments marked as benign) might result in debates and discomforts~\citep{baldini2021your}. 

In contrast to another well-known group fairness definition, \ie, demographic parity, EO does not require positive prediction rates to be the same across different demographic groups, which could possibly severely downgrade the model performance when the sensitive attribute is correlated to the task label~\citep{hardt2016equality, han2019inherent}.

\section{Our Method}
In this section, we first theoretically connect learning fair representations with contrastive learning (Sec.~\ref{subsec:theory}). In particular, we first show that learning fair representations for equalized odds requires the minimization of $I(Z'; Z \mid Y)$ and the simultaneous maximization of $I(Z'; Z \mid A, Y)$. To this end, we provide an upper bound of $I(Z'; Z \mid Y)$ and a lower bound of $I(Z'; Z \mid A, Y)$ to relax the original objective and then establish a relationship between the bounds and the (conditional) supervised contrastive learning objectives. 
Finally, inspired by our theoretical analysis, we design two practical methods for learning fair representations (Sec.~\ref{subsec:methodology:practical-implementation}). 
Due to the space limit, we defer all detailed proofs to Appendix~\ref{app:proof}.

\subsection{Connections between Contrastive Learning and Learning Fair Representations}
\label{subsec:theory}

In order to learn a model (text encoder followed by classifier) to satisfy equalized odds, we aim to learn a latent representation $Z$ such that $Z \perp A \mid Y$. From an information-theoretic perspective, it suffices to minimize the conditional mutual information $I(Z; A \mid Y)$ to ensure EO due to the celebrated data-processing inequality. We identify a connection between contrastive learning and learning fair representations when the representations enjoy certain benign structures. Next, we formally state the assumptions to characterize such a structure.
\begin{assumption}
\label{assump:epsilon}
Let $Z$ and $Z'$ be the corresponding features from $X$ and $X'$, respectively. We assume that there exists a small positive constant $\epsilon > 0$, such that $H(Z\mid Z', Y)\leq\epsilon$.
\end{assumption}
At a high level, Assumption~\ref{assump:epsilon} says that the learned features from the contrastive learning procedure are well conditionally aligned~\citep{wang2020understanding}. 
Specifically, given the label of a feature and its corresponding augmented feature, it is relatively easy to infer the corresponding positive pair used in the contrastive learning procedure. 
Note that the conditional entropy could be understood as the minimum inference error from this perspective~\citep{farnia2016minimax}. Under Assumption~\ref{assump:epsilon}, we provide the following lemma to characterize the relationship between $Z$, $Z'$, $A$, and $Y$ in terms of (conditional) mutual information. 

\begin{lemma}
Under Assumption~\ref{assump:epsilon}, given a set of data augmentations $\mathcal{T}$, let $X'$ be the augmented input data where $X'=t(X),~ t\sim\mathcal{T}$. Assuming the following Markov chains $X\overset{f}{\to} Z \overset{g}{\to} \hat{Y}$ and $X' \overset{f}{\to} Z' \overset{g}{\to} \hat{Y}'$ hold,
we have
\begin{align*}
&\, I(Z'; Z \mid Y) - I(Z'; Z \mid A, Y) - \epsilon \\
\leq \ &\,  I(Z; A\mid Y) \\
\leq \ &\,  I(Z'; Z \mid Y) - I(Z'; Z \mid A, Y) + \epsilon.
\end{align*}
\label{prop:cmi-aug}
\end{lemma}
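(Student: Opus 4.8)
The plan is to establish the claimed sandwich bound by relating the quantity $I(Z;A\mid Y)$ to the difference $I(Z';Z\mid Y) - I(Z';Z\mid A,Y)$ through a careful expansion of conditional mutual informations into conditional entropies, and then controlling the error terms using Assumption~\ref{assump:epsilon}. The key observation I would exploit is that both $I(Z';Z\mid Y)$ and $I(Z';Z\mid A,Y)$ can be written as $H(Z\mid\cdot) - H(Z\mid Z',\cdot)$, so that their difference telescopes into a combination of terms of the form $H(Z\mid Y) - H(Z\mid A,Y)$ (which is exactly $I(Z;A\mid Y)$) plus residual terms $H(Z\mid Z',A,Y) - H(Z\mid Z',Y)$ that measure how much extra the augmented feature together with $A$ tells us about $Z$ beyond what the augmented feature and $Y$ already tell us.

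**First I would** write out the two expansions explicitly:
\begin{align*}
I(Z';Z\mid Y) &= H(Z\mid Y) - H(Z\mid Z',Y),\\
I(Z';Z\mid A,Y) &= H(Z\mid A,Y) - H(Z\mid Z',A,Y).
\end{align*}
Subtracting gives
\begin{align*}
I(Z';Z\mid Y) - I(Z';Z\mid A,Y) = \big(H(Z\mid Y) - H(Z\mid A,Y)\big) + \big(H(Z\mid Z',A,Y) - H(Z\mid Z',Y)\big).
\end{align*}
The first bracket is precisely $I(Z;A\mid Y)$. So the whole claim reduces to showing that the second bracket, call it $\delta \defeq H(Z\mid Z',A,Y) - H(Z\mid Z',Y)$, satisfies $|\delta|\leq\epsilon$. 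The upper bound $\delta\leq 0$ is immediate since conditioning reduces entropy ($H(Z\mid Z',A,Y)\leq H(Z\mid Z',Y)$), hence $\delta\in[-\epsilon,0]$ once we show $\delta\geq-\epsilon$. For the lower bound on $\delta$, I would use $H(Z\mid Z',A,Y)\geq 0$ is too weak; instead note $H(Z\mid Z',A,Y)\geq H(Z\mid Z',Y) - H(A\mid Z',Y)$ is also not directly what we want. The cleaner route: $H(Z\mid Z',Y) \leq H(Z\mid Z')$ is irrelevant; rather, since $H(Z\mid Z',Y)\leq\epsilon$ is \emph{not} what Assumption~\ref{assump:epsilon} states — it states $H(Z\mid Z',Y)\leq\epsilon$. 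Wait, it does: the assumption is exactly $H(Z\mid Z',Y)\leq\epsilon$. Then $0\leq H(Z\mid Z',A,Y)\leq H(Z\mid Z',Y)\leq\epsilon$, so both entropies in $\delta$ lie in $[0,\epsilon]$, giving $|\delta|\leq\epsilon$ directly. Rearranging $I(Z;A\mid Y) = I(Z';Z\mid Y) - I(Z';Z\mid A,Y) - \delta$ with $\delta\in[-\epsilon,\epsilon]$ (in fact $\delta\in[-\epsilon,0]$) then yields both inequalities in the lemma.

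**The main obstacle** I anticipate is being careful about which conditional entropies the Markov-chain assumptions and the augmentation structure actually allow, and ensuring no nonnegativity or chain-rule step is applied in the wrong direction — in particular verifying that Assumption~\ref{assump:epsilon} applies to the randomness over the augmentation $t\sim\mathcal{T}$ consistently, and that all quantities are well-defined (e.g. that $Z$ is treated as the relevant random variable rather than a deterministic function, so the entropies are genuinely finite). Once the expansion above is in place, the remaining work is routine: everything follows from (i) the identity decomposing conditional mutual information into a difference of conditional entropies, (ii) "conditioning reduces entropy," and (iii) the single inequality of Assumption~\ref{assump:epsilon}. I would present the two directions (upper and lower bound on $I(Z;A\mid Y)$) as immediate consequences of the bound $0\leq H(Z\mid Z',A,Y)\leq H(Z\mid Z',Y)\leq\epsilon$.
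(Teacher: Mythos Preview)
Your proposal is correct and follows essentially the same approach as the paper: expand each conditional mutual information as a difference of conditional entropies, regroup to isolate $I(Z;A\mid Y)$, and bound the residual $\delta = H(Z\mid Z',A,Y) - H(Z\mid Z',Y)$ using nonnegativity of entropy, the fact that conditioning reduces entropy, and Assumption~\ref{assump:epsilon}. Your observation that in fact $\delta\le 0$ (so the lower bound of the lemma holds even without the $-\epsilon$) is a slight sharpening of what the paper proves, but the core argument is identical.
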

Lemma~\ref{prop:cmi-aug} indicates that we can minimize $I(Z; A \mid Y)$ via (1) minimizing $I(Z'; Z \mid Y)$ and (2) maximizing $I(Z'; Z \mid A, Y)$. In what follows, we will present an upper (lower) bound to minimize $I(Z'; Z \mid Y)$ (maximize $I(Z'; Z \mid A, Y)$) and connect the bounds with contrastive learning objectives. We first provide an upper bound of $I(Z'; Z \mid Y)$. 

\begin{proposition}
Given the assumptions in Lemma~\ref{prop:cmi-aug}, we have
\begin{align*}
&\quad\, I(Z';  Z \mid Y) \\
& \leq - \mathbb{E}_{p(y)} \big[ \mathbb{E}_{p(z'\mid y)}[\mathbb{E}_{p(z\mid y)}[\log p(z'\mid z, y)]] \big].
\end{align*}
\label{thm:cmi-sup-constrative-upper}
\end{proposition}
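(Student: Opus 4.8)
The plan is to express $I(Z'; Z \mid Y)$ in terms of entropies and then bound the relevant conditional entropy from above by a cross-entropy-type quantity. Starting from the identity
\begin{align*}
I(Z'; Z \mid Y) = H(Z' \mid Y) - H(Z' \mid Z, Y),
\end{align*}
I would discard the first term using the nonnegativity of (differential) entropy only if that were available; more robustly, I would instead bound $H(Z' \mid Y)$ by $H(Z' \mid Z, Y) $ plus the mutual information and attack $-H(Z'\mid Z, Y)$ directly. The cleanest route: write $I(Z';Z\mid Y) = -H(Z'\mid Z,Y) + H(Z'\mid Y) \le -H(Z'\mid Z,Y) + \text{(something that vanishes)}$ is not quite right, so the actual key step is to use that for any conditional distribution $q(z'\mid z,y)$,
\begin{align*}
-H(Z' \mid Z, Y) = \mathbb{E}_{p(z,z',y)}[\log p(z' \mid z, y)] \le \mathbb{E}_{p(z,z',y)}[\log q(z'\mid z,y)] + \mathbb{E}_{p(z,y)}[\dkl(p(z'\mid z,y)\,\|\,q(z'\mid z,y))],
\end{align*}
and then drop the KL term after a suitable choice. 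But the statement as written has the bound $-\mathbb{E}_{p(y)}[\mathbb{E}_{p(z'\mid y)}\mathbb{E}_{p(z\mid y)}[\log p(z'\mid z,y)]]$, where the inner expectations are over the \emph{product} of the marginals $p(z'\mid y)p(z\mid y)$ rather than the joint $p(z,z'\mid y)$. So the real mechanism must be: $H(Z'\mid Y) = -\mathbb{E}_{p(z',y)}[\log p(z'\mid y)]$, and I claim $p(z'\mid y) = \mathbb{E}_{p(z\mid y)}[p(z'\mid z,y)]$ combined with Jensen's inequality (concavity of $\log$) gives $\log p(z'\mid y) \ge \mathbb{E}_{p(z\mid y)}[\log p(z'\mid z,y)]$, hence $H(Z'\mid Y) \le -\mathbb{E}_{p(y)}\mathbb{E}_{p(z'\mid y)}\mathbb{E}_{p(z\mid y)}[\log p(z'\mid z,y)]$.

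So the actual argument I would run is: (1) use $I(Z';Z\mid Y) \le H(Z'\mid Y)$, which follows from $I(Z';Z\mid Y) = H(Z'\mid Y) - H(Z'\mid Z,Y)$ together with $H(Z'\mid Z,Y)\ge 0$ — this needs $Z'$ to be discrete, or else the inequality $I \le H(Z'\mid Y)$ must be justified differently (e.g.\ in the discretized/quantized representation space, which is the standard implicit assumption in these information-theoretic CL analyses); (2) rewrite $H(Z'\mid Y) = -\mathbb{E}_{p(y)}\mathbb{E}_{p(z'\mid y)}[\log p(z'\mid y)]$; (3) apply the law of total probability $p(z'\mid y) = \int p(z'\mid z,y)\,p(z\mid y)\,dz = \mathbb{E}_{p(z\mid y)}[p(z'\mid z,y)]$ — here I would need the conditional independence $Z' \perp Y \mid Z$ (or at least that $Z$ is sufficient for $Z'$ in the appropriate sense) coming from the Markov structure; then (4) push the logarithm inside the expectation over $p(z\mid y)$ by Jensen, flip the sign, and collect.

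The step I expect to be the main obstacle is justifying the decomposition $p(z'\mid y) = \mathbb{E}_{p(z\mid y)}[p(z'\mid z,y)]$ at the right level of generality. This is just marginalization, so it is automatic, but the subtlety is whether one actually needs $p(z'\mid z,y)$ or $p(z'\mid z)$ — the Markov chains $X \to Z \to \hat Y$ and $X' \to Z' \to \hat Y'$ in the hypothesis control the $\hat Y$ direction, not directly the relationship between $Z$, $Z'$, and $Y$, so I would need to invoke how $Z,Z'$ are generated from the common source $X$ (and how $Y$ relates to $X$, per Figure~\ref{fig:graphical_model}) to write the marginalization with $y$ retained in the conditioning. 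A secondary concern is the discrete-vs-continuous issue for step (1): strictly, $I(Z';Z\mid Y)\le H(Z'\mid Y)$ can fail for differential entropy, so I would either state the result for discrete $Z'$ or note that in practice the representation is finite-precision, matching the convention used elsewhere in the paper. Assumption~\ref{assump:epsilon} does not seem to be needed for this particular proposition (it is only invoked for the lower bound on $I(Z';Z\mid A,Y)$ and for Lemma~\ref{prop:cmi-aug}), so I would not use it here.
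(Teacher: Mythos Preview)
Your final plan (steps (1)--(4)) is correct and matches the paper's proof exactly: the paper writes $I(Z';Z\mid Y) = -H(Z'\mid Z,Y) + H(Z'\mid Y)$, expands both entropies, applies the identity $p(z'\mid y) = \mathbb{E}_{p(z\mid y)}[p(z'\mid z,y)]$ followed by Jensen on the $H(Z'\mid Y)$ term, and then drops $-H(Z'\mid Z,Y)$ using nonnegativity---the only difference is that the paper drops the $-H(Z'\mid Z,Y)$ term at the end rather than at the start. Your worry about the marginalization $p(z'\mid y)=\mathbb{E}_{p(z\mid y)}[p(z'\mid z,y)]$ is unfounded (it is the law of total probability and needs no Markov structure), and you are right that Assumption~\ref{assump:epsilon} is not used here and that the step $H(Z'\mid Z,Y)\ge 0$ implicitly assumes a discrete (or nonnegative-entropy) setting, which the paper does not comment on.
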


In order to better interpret the right side in  Proposition~\ref{thm:cmi-sup-constrative-upper}, we define a similarity function $s(z', z; y)$ between $z'$ and $z$ for each $y$ and assume $s(z', z; y) \propto p(z'\mid z, y)$ (\ie, the more similar $z'$ and $z$ are in the latent space given task label $y$, the more likely $z'$ is generated by $z$ via data augmentation)\footnote{In the remainder of the paper, we let $s(\cdot,\cdot) = s(\cdot,\cdot; y),~\forall~Y=y$ for the ease of practical implementations.}. With this assumption, the upper bound provided in Proposition~\ref{thm:cmi-sup-constrative-upper} implies that $I(Z'; Z \mid Y)$ can be minimized by encouraging similarity between any latent representations given the same task label, which is consistent with the goal of supervised contrastive loss~\citep{khosla2020supervised}. Formally, given a batch of augmented examples ${(x_i, y_i, a_i)_{i=1}^{2N}}$ with size $2N$, where the last half examples of the batch are the augmented views of the first half and they share the same task labels (as well as the same sensitive attributes), \ie, $x_{i+N} = t(x_i)$ for $i\in[N]$ and $t\sim \mathcal{T}$. 
Let $N_{y_i}$ be the total number of examples in the batch that have the same task label as $y_i$, then supervised contrastive loss takes the following form:
\begin{equation}
    L_{\text{sup}} = -\sum_{i=1}^{2N} \frac{1}{N_{y_i}-1} \sum_{j=1}^{2N} \mathbf{1}_{i\neq j, y_i = y_j} \log(\ell_{ij}),
\end{equation}
and $\ell_{ij}$ is defined as
\begin{align*}
\nonumber
\ell_{ij} & = \frac{\exp\big(f(x_i) \cdot f(x_j) / \tau \big)}{\sum_{k=1}^{2N} \mathbf{1}_{i\neq k} \exp\big(f(x_i) \cdot  f(x_k) /\tau \big) },
\end{align*}
where $\tau$ is the temperature parameter, $\mathbf{1}_{i\neq k} = \mathbf{1}\{i\neq k\}$ and $\mathbf{1}\{\cdot\}$ is the indicator function, the similarity function is $s\big(f(x_i), f(x_j)\big) = \exp\big(f(x_i) \cdot  f(x_j) /\tau \big)$, and the $\cdot$ symbol denotes the inner (dot) product.
Supervised contrastive loss $L_{\text{sup}}$ aims to encourage similarity between different examples with the same task label and discourage the ones having different labels. Thus, we minimize $L_{\text{sup}}$ to approximately minimize $I(Z'; Z \mid Y)$.
Next, we provide a lower bound of $I(Z'; Z \mid A, Y)$ for the maximization of $I(Z'; Z \mid A, Y)$. 
\begin{proposition}
Given the assumptions in Lemma~\ref{prop:cmi-aug}, define conditional supervised InfoNCE as \text{CS-InfoNCE}, \ie,

{\footnotesize $$\underbrace{\sup_{s} \mathbb{E}_{p(a, y)} \bigg[ \mathbb{E}_{p(z'_i, z_i \mid a, y)^{\otimes N}} \big[ \log \frac{\exp(s(z'_i, z_i))}{\frac{1}{N} \sum_{j=1}^N \exp(s(z'_i, z_j)) } \big] \bigg]}_{\large \text{CS-InfoNCE}},$$}

\noindent
where $p(\cdot)^{\otimes N}$ denotes the probability distributions of $N$ independent examples 
and $s(\cdot, \cdot)$ is any similarity function that measure the similarity of $z'_i$ and $z_i$. Then, we have 
\begin{align}
\nonumber
\text{CS-InfoNCE} \leq I(Z'; Z \mid A, Y).
\end{align}
\label{thm:cmi-sup-constrative-lower}
\end{proposition}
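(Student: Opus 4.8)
The plan is to recognize Proposition~\ref{thm:cmi-sup-constrative-lower} as a \emph{conditional} analogue of the classical InfoNCE lower bound on mutual information (as in conditional contrastive learning~\citep{tsai2021conditional, tsai2022conditional}), so the proof splits into (i) conditioning on $(A,Y)$ and (ii) establishing the unconditional ``InfoNCE $\leq$ MI'' inequality inside each conditional slice. Note that, unlike Lemma~\ref{prop:cmi-aug} and Proposition~\ref{thm:cmi-sup-constrative-upper}, this statement does not actually use Assumption~\ref{assump:epsilon}: it holds for \emph{any} joint law of $(Z', Z, A, Y)$, and the hypotheses of Lemma~\ref{prop:cmi-aug} are invoked only to ensure the conditional distribution $p(z', z \mid a, y)$ is well defined.

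First I would fix a value $(a,y)$ and work inside $p(\cdot \mid a, y)$. Expanding the expectation $\mathbb{E}_{p(z'_i, z_i \mid a, y)^{\otimes N}}[\cdot]$ over the $N$ i.i.d.\ pairs, the objective depends only on $z'_i$ and $z_1, \dots, z_N$; by exchangeability of the pairs we may set $i = 1$ and marginalize out $z'_2, \dots, z'_N$, leaving $z'_1 \sim p(z' \mid a, y)$, a positive $z_1 \sim p(z \mid z'_1, a, y)$, and negatives $z_2, \dots, z_N \sim p(z \mid a, y)$ drawn independently of $(z'_1, z_1)$ and of one another. It then suffices to show that, for every similarity function $s$,
\begin{equation*}
\mathbb{E}\Big[ \log \frac{\exp(s(z'_1, z_1))}{\frac1N \sum_{j=1}^N \exp(s(z'_1, z_j))} \Big] \;\leq\; I(Z'; Z \mid A = a, Y = y).
\end{equation*}

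To prove this I would introduce the aggregated variable $W := (Z_1, \dots, Z_N)$. Because $Z_2, \dots, Z_N$ are independent of $(Z'_1, Z_1)$, the chain rule gives $I(Z'_1; W \mid a, y) = I(Z'_1; Z_1 \mid a, y) = I(Z'; Z \mid A = a, Y = y)$, so it is enough to lower bound $I(Z'_1; W \mid a, y)$ by the left-hand side above. Applying the Donsker--Varadhan variational lower bound for KL divergence to $I(Z'_1; W \mid a, y) = \dkl\big( p(z'_1, w \mid a, y) \,\|\, p(z'_1 \mid a, y)\, p(w \mid a, y) \big)$ with the ``InfoNCE critic'' $g(z'_1, w) := \log \frac{\exp(s(z'_1, z_1))}{\frac1N \sum_j \exp(s(z'_1, z_j))}$, the expectation term $\mathbb{E}_{p(z'_1, w \mid a, y)}[g]$ is exactly the left-hand side of the display, while the log-partition term $\log \mathbb{E}_{p(z'_1 \mid a, y)\, p(w \mid a, y)}[e^{g}]$ vanishes: under the fully product law the $z_j$ are i.i.d.\ and independent of $z'_1$, the $N$ quantities $\exp(s(z'_1, z_i)) / \big(\frac1N \sum_j \exp(s(z'_1, z_j))\big)$ sum identically to $N$, and exchangeability forces each to have expectation $1$, so $\mathbb{E}[e^{g}] = 1$ and $\log \mathbb{E}[e^{g}] = 0$. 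This yields the displayed inequality.

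Finally, averaging the pointwise inequalities over $(a, y) \sim p(a, y)$ gives $\mathbb{E}_{p(a,y)}\big[\mathbb{E}_{p(z'_i, z_i \mid a, y)^{\otimes N}}[\cdots]\big] \leq \mathbb{E}_{p(a,y)}\big[ I(Z'; Z \mid A = a, Y = y) \big] = I(Z'; Z \mid A, Y)$; since this holds for \emph{every} $s$, it survives taking the supremum over $s$, which is precisely the claim. The one step needing genuine care is the Donsker--Varadhan argument: the point is that the self-normalizing form of the InfoNCE critic makes the log-partition term identically zero (not merely bounded), which is what upgrades a generic variational lower bound into the clean inequality; the marginalization of augmented copies, the conditioning, and the final averaging are routine.
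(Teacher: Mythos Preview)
Your proof is correct and takes a different route from the paper's. You condition on $(a,y)$ first and prove the standard InfoNCE\,$\leq$\,MI bound inside each slice---using Donsker--Varadhan with the self-normalizing critic, where exchangeability of the negatives under the product law forces $\log\mathbb{E}_Q[e^g]=0$---and then average over $(a,y)$. The paper instead works with unconditional measures: it introduces $\mathcal{P}=P_{Z',Z}$ and $\mathcal{Q}=\mathbb{E}_{P_{A,Y}}[P_{Z'\mid A,Y}P_{Z\mid A,Y}]$, rewrites CS-InfoNCE as an InfoNCE whose positive pair has law $\mathcal{P}$ and whose anchor--negative marginals are $\mathcal{Q}$, bounds this by $\dkl(\mathcal{P}\,\|\,\mathcal{Q})$ via a Nguyen--Wainwright--Jordan variational lemma, and then proves $\dkl(\mathcal{P}\,\|\,\mathcal{Q})\leq I(Z';Z\mid A,Y)$ through a second variational comparison. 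Your approach is shorter and more transparent, and it sidesteps both the intermediate ``weak conditional MI'' quantity and the extra variational step; the paper's route, following the framework of \citet{tsai2021conditional}, has the modest benefit of isolating $\dkl(\mathcal{P}\,\|\,\mathcal{Q})$ as an object sitting between CS-InfoNCE and the true CMI, at the cost of a longer chain of lemmas.
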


Proposition~\ref{thm:cmi-sup-constrative-lower} indicates the maximization of $\text{CS-InfoNCE}$ leads to the maximization of $I(Z'; Z \mid A, Y)$. Given the examples that share the same task label and sensitive attribute, $\text{CS-InfoNCE}$ encourages the similarity between different views of the same examples while discouraging others. Note that all positive and negative examples w.r.t. the anchoring example share the same task label and sensitive attribute. Given the same batch of examples ${(x_i, y_i, a_i)_{i=1}^{2N}}$ with size $2N$, we can formulate the contrastive objective as
\begin{align}
    L_{\text{CS-InfoNCE}} & = - \sum_{i=1}^{2N} \frac{1}{N_{a_i, y_i}-1}\log{(\ell_i)},
\end{align}
and $\ell_i$ is defined as
\begin{align*}
        \nonumber
    \ell_i & \!=\!
    \frac{\exp\left(f(x_i) \cdot f(x_i')/\tau\right)}{\sum_{k=1}^{2N} \mathbf{1}_{i\neq k,a_i=a_k, y_i=y_k} \exp\left(f(x_i) \cdot f(x_k)/\tau \right)},
\end{align*}
where $N_{a_i, y_i}$ is the total number of examples in the batch that have the same task label and sensitive attribute as $y_i$ and $a_i$, and $x_i$ and $x_i'$ are the different views of the same example.

\vspace{0.05in}
\noindent
\textbf{Interpretation of} $L_{\text{sup}}$ \textbf{and} $L_{\text{CS-InfoNCE}}$ \textbf{in learning fair representations.}
In learning fair representations, the role of $L_{\text{sup}}$ is to learn aligned and uniform representations~\citep{wang2020understanding} for each task label, while the role of $L_{\text{CS-InfoNCE}}$ is to encourage the dissimilarity of different examples that share the same task labels and sensitive attributes. In an ideal case where $L_{\text{sup}}=0$, each data point that shares the same task label in the latent space collapse to a single point, and the perfect representations are learned. In this case, $L_{\text{CS-InfoNCE}}=0$ as well. In practice, the overall combined effect of the $L_{\text{sup}}$ and $L_{\text{CS-InfoNCE}}$ will encourage the similarity of examples having the same task label but belonging to different groups. 
Thus, our theory could also explain why other slightly different proposed contrastive objectives in a concurrent work~\citep{park2022fair} could mitigate equalized odds.
In Appendix~\ref{app-subsec:tsne-visualization}, we provide T-SNE visualization~\citep{van2008visualizing} of the text embeddings using different training objectives to help better understand our methods.

\subsection{Practical Implementations} \label{subsec:methodology:practical-implementation}

The existing contrastive representation learning approaches fall into two categories: two-stage methods~\citep{khosla2020supervised, pmlr-v119-chen20j} and one-stage methods~\citep{gunel2021supervised, cui2021parametric}. Two-stage methods first pretrain the encoder in the first stage using the contrastive objective, then fix the encoder, and fine-tune the classifier using cross-entropy (CE) loss in the second stage. One-stage methods train both encoder and classifier using CE loss and contrastive loss end-to-end. Following the previous settings, we also implement our methods in these two ways. For the two-stage CL method, we first pretrain the text encoder using the following loss function in the first stage:
\begin{equation}
   L_{\text{sup}} + \lambda\cdot L_{\text{CS-InfoNCE}},
   \label{eq:two-stage-cl}
\end{equation}
then we fix the pretrained encoder, and fine-tune classifier using CE loss. Note that $\lambda \geq 0$ controls the intensity of $L_{\text{CS-InfoNCE}}$. 
For the one-stage CL method, similar to~\citet{gunel2021supervised}, we formulate the loss function as:
\begin{equation}
   (1-\gamma)\cdot L_{\text{CE}} + \gamma\cdot L_{\text{sup}} + \lambda\cdot L_{\text{CS-InfoNCE}},
   \label{eq:one-stage-cl}
\end{equation}
where $\gamma\in [0, 1]$ controls the relative weight of $L_{\text{sup}}$ compared to $L_{\text{CE}}$.
The major advantage of our approach is that it can be directly substituted into existing NLP pipelines that use the ``pretrain-and-finetune'' paradigm popularized by large language models such as BERT. 
NLP practitioners can swap the fair CL finetuner into these pipelines to boost model fairness at low cost, with robust behavior against hyperparameter choices (see Sec.~\ref{subsec:results-and-analysis}). Whereas large language models made it simple to build models with high performance, fair CL makes it simple to build models with high performance and fairness.

\section{Experiments}

In this section, we conduct experiments to investigate the following research questions:

\begin{itemize}
    \item[\textbf{RQ 1.}] How can we control the trade-offs between model classification performance and fairness via conditional supervised contrastive learning?
    \item[\textbf{RQ 2.}] How do conditional supervised contrastive learning methods perform in terms of trade-offs between model performance and fairness compared to other in-processing bias mitigation methods in text classification?
    \item[\textbf{RQ 3.}] Is conditional supervised contrastive learning sensitive to hyperparameter changes?
\end{itemize}

\subsection{Experimental Setup}

\paragraph{Datasets.} We perform experiments using the following two datasets (see Appendix~\ref{app:exp-details} for more details of the datasets and the data prepossessing pipelines): 
\begin{itemize}
    \item \texttt{Jigsaw-toxicity}\footnote{The dataset is publicly available at: \url{https://www.kaggle.com/competitions/jigsaw-unintended-bias-in-toxicity-classification}.} is a dataset for online comment toxicity classification. The main task of the dataset is to determine if the online comment is toxic, and we use ``race and ethnicity'' as the sensitive attribute (\eg, whether ``black'' identity is mentioned in the comment text or not). 
    \item \texttt{Biasbios}~\citep{de2019bias} is a dataset for occupation classification. The main task of the dataset is to determine the people's occupations given their biographies. The sensitive attribute is binary gender (\ie, male and female). 
\end{itemize}

\paragraph{Evaluation Metrics.}
We evaluate our model based on model classification performance and EO fairness. We use the F1 score for model performance and True Positive Equality Difference + False Positive Equality Difference~\citep{dixon2018measuring} for EO fairness:
\begin{align*}
    \Delta_{\text{TPR}} & = \sum_a | \text{TPR}_a  - \text{TPR}_{\text{overall}} |, \\
    \Delta_{\text{FPR}} & = \sum_a | \text{FPR}_a  - \text{FPR}_{\text{overall}} |, 
\end{align*}
where $\Delta_{\text{TPR}}$ ($\Delta_{\text{FPR}}$) is the true positive rate (false negative rate) for sensitive attribute $a$ and $\text{TPR}_{\text{overall}}$ ($\text{FPR}_{\text{overall}}$) is the overall true positive rate (false negative rate). Following~\citet{pruksachatkun-etal-2021-robustness}, we define equalized odds gap $\Delta_{\text{EO}} = \Delta_{\text{TPR}} + \Delta_{\text{FPR}}$, since equalized odds aligns with $\Delta_{\text{TPR}}+\Delta_{\text{FPR}}$, and when it is satisfied, $\Delta_{\text{TPR}}=\Delta_{\text{FPR}}=0$~\citep{borkan2019nuanced}. Note that when $|\yspace| > 2$, $\Delta_{\text{EO}}$ will be summed over each value in $\yspace$ since TPR and FPR are defined over each class (\ie, $\Delta_{\text{EO}} = \sum_y \Delta_{\text{EO}}^y$).

\paragraph{Implementations and Baselines.} 
In our experiments, we use BERT~\citep{devlin-etal-2019-bert} (\texttt{bert-base-uncased} as the text encoder followed by a two-layer MLP as the classifier)\footnote{We use the huggingface transformer implementation: \url{https://github.com/huggingface/transformers}.}. 
As suggested by previous works~\citep{khosla2020supervised, gao2021simcse}, the performance of contrastive learning is closely related to the choice of the following hyperparameters: (1) temperature, (2) (pre-training) batch size, and (3) data augmentation strategy. 
Thus, we conduct a grid-based hyperparameter search for temperature $\tau\in \{0.1, 0.5, 1.0, 2.0\}$,  (pre-training) batch size $\mathrm{bsz}\in \{32, 64, 128, 256\}$, and data augmentation strategy $t \in$ \{\text{EDA}, \text{back translation}, \text{CLM insert}, \text{CLM substitute}\} (see Appendix~\ref{app:exp-details} for the detailed description of different augmentation strategies) for both two-stage CL and one-stage CL. We also conduct grid search of $\gamma\in\{0.1, 0.3, 0.7, 0.9\}$ in Eq.~\eqref{eq:one-stage-cl} for one-stage CL.
In Appendix~\ref{app:exp-details}, we provide the remaining hyperparameter details (\eg, learning rate, training epochs, optimizer). Since it is not feasible to train large language models with large batch sizes via contrastive objectives given limited GPU memory, we use the gradient cache technique~\citep{gao-etal-2021-scaling} to adapt our implementations to limited GPU memory settings. 

We compare our methods with the following baselines, which have been empirically demonstrated effective for bias mitigation in text classification: 
\begin{enumerate}
    \item[(1)] Adversarial training~\citep{elazar-goldberg-2018-adversarial}: Following the encoder + classifier setting, adversarial training leverages a discriminator to learn latent representations oblivious to the sensitive attribute.
    Note that the original adversarial training method is tailored for demographic parity and it is well known that demographic parity and equal odds are incompatible given different base rates~\citep{Kleinberg2017InherentTI, ball2021differential}. To this end, we use the conditional learning techniques~\citep{pmlr-v80-madras18a, zhao2019conditional} to adapt adversarial training for equalized odds.
    \item[(2)] Adversarial training with diverse adversaries  (diverse adversaries)~\citep{han-etal-2021-diverse}: Adversarial training with diverse adversaries improves adversarial training by using an ensemble of discriminators and encourages the discriminator to learn orthogonal representations. Similar to adversarial training, we also apply the conditional learning techniques for learning the adversarial discriminators.
    \item[(3)] Iterative null-space projection (INLP)~\citep{ravfogel-etal-2020-null}: Given a pretrained text encoder (we use CE loss to pretrain the text encoder and drop the prediction head using the validation set), INLP learns a linear guarding layer on top of the pretrained text encoder to filter the sensitive information and fine-tune the classifier given the pretrained text encoder and INLP. INLP learns the linear guarding layer by projecting the parameter matrices of linear classifiers (\eg, SVM) to their null spaces iteratively. The training data of linear classifiers are the latent representations of input texts and sensitive attributes. In order to tailor INLP for equaled odds, \citet{ravfogel-etal-2020-null} learns the linear classifier given the data from the same class each round.
\end{enumerate}
We also use training using CE loss as a baseline. 
Except for INLP, all methods we test in our experiments train the text encoder (\eg, BERT) directly, while INLP is a post-hoc debiasing method given a text encoder. 
In a sense, INLP is orthogonal to other methods since it tries to remove group-specific information after we learn the representations, while other methods learn the fair representations directly.
We run each experiment with five different seeds and report the mean and standard deviation values for each evaluation metric.

\subsection{Results and Analysis} \label{subsec:results-and-analysis}
\paragraph{RQ 1.} In order to control the trade-offs between model classification performance and EO fairness, we vary the values of $\lambda$ in Eq.~\eqref{eq:two-stage-cl} and Eq.~\eqref{eq:one-stage-cl}. Figure~\ref{fig:rq1} shows the classification performance and EO fairness of one-stage and two-stage CL when $\lambda$ changes. 
Overall, as $\lambda$ increases, the equalized odds gaps shrink at the cost of model classification performance.
Compared to one-stage CL, two-stage CL achieves more flexible trade-offs in general. Given the same range of $\lambda$, the change of equalized odds gaps in two-stage CL is more significant than in one-stage CL. At the same time, the corresponding model classification performances are comparable or remain better.

\begin{figure*}
     \centering
     \begin{subfigure}[b]{0.48\textwidth}
         \centering
         \includegraphics[width=\textwidth]{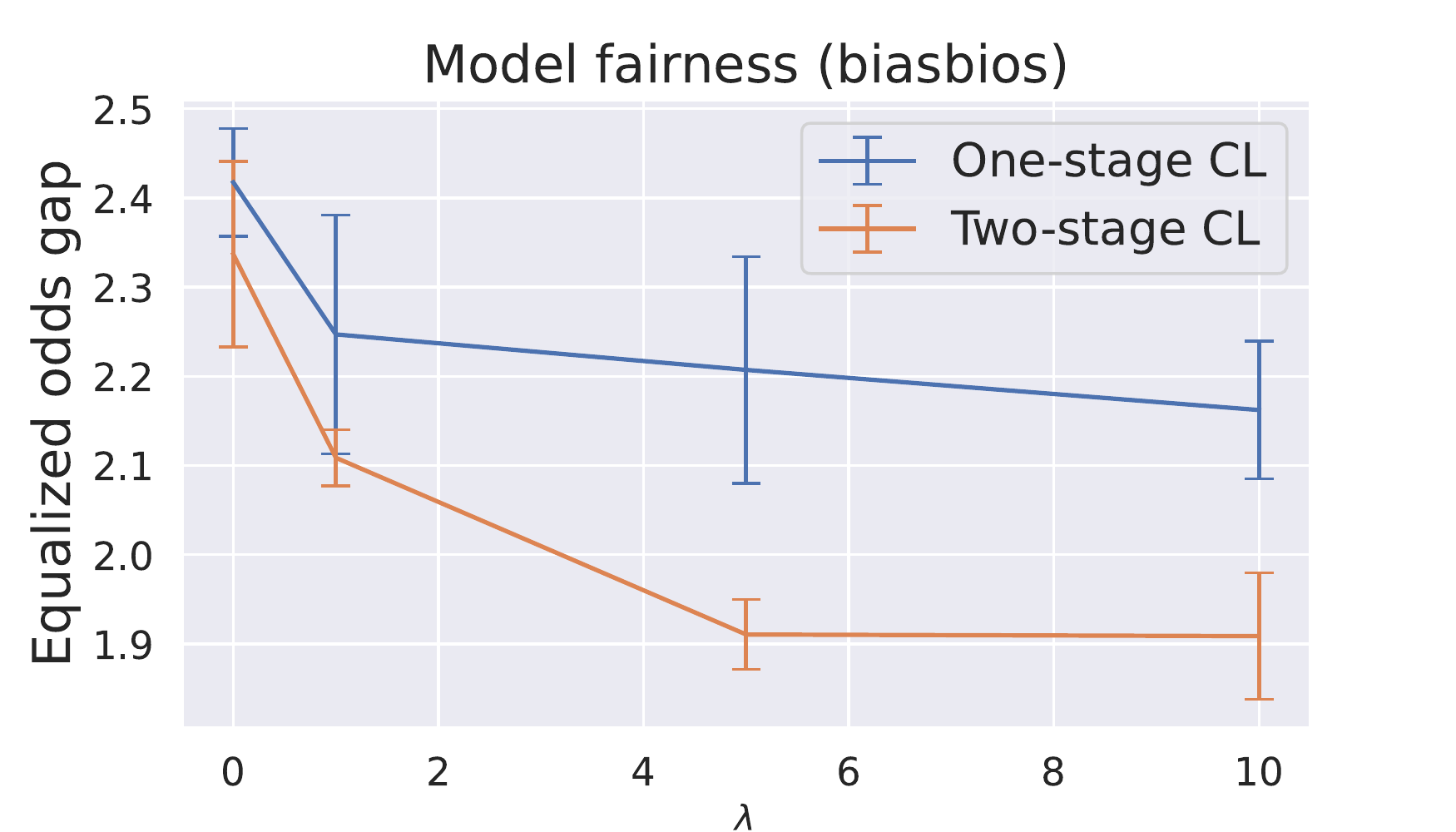}
     \end{subfigure}
     \hfill
     \begin{subfigure}[b]{0.48\textwidth}
         \centering
         \includegraphics[width=\textwidth]{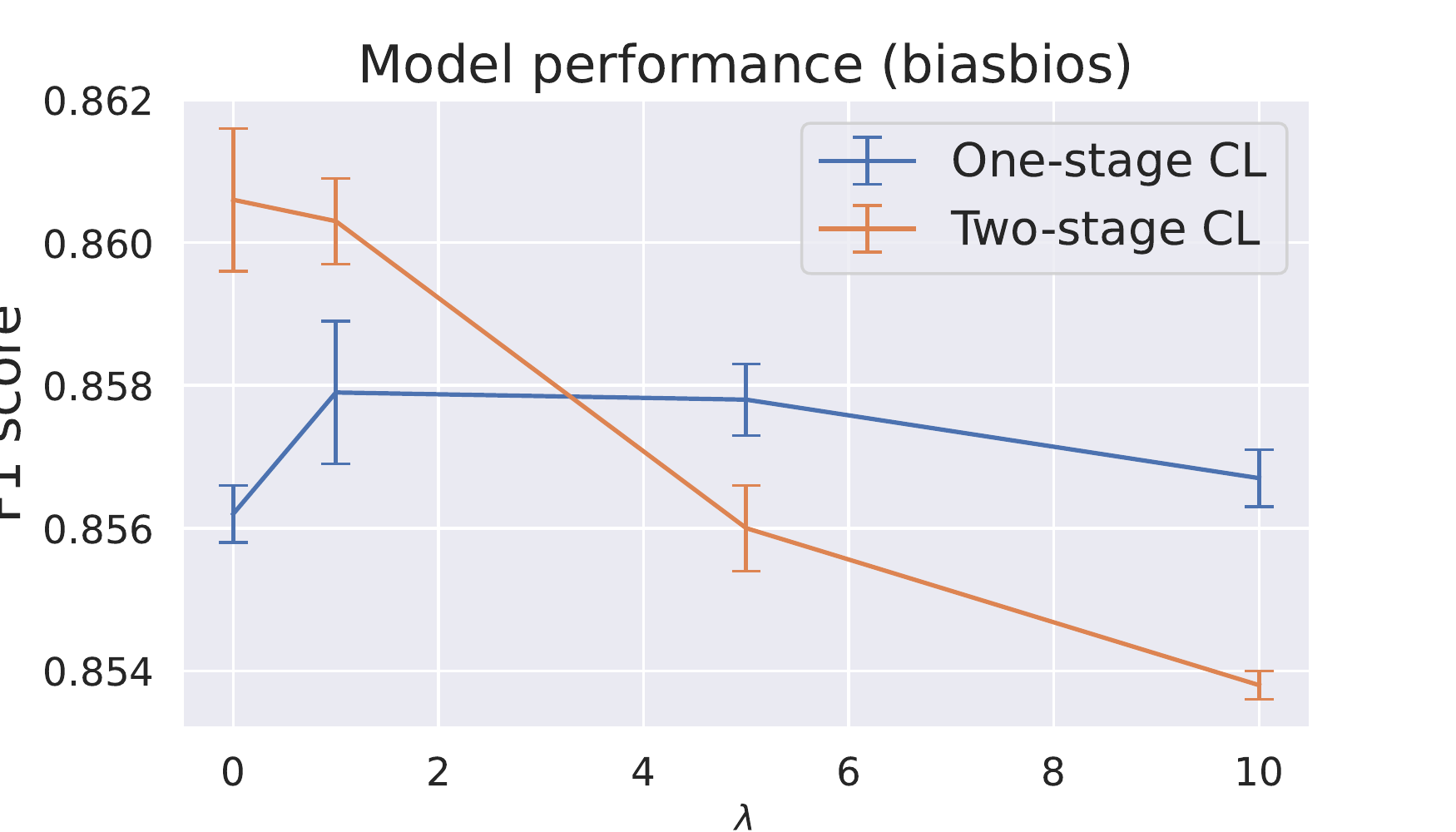}
     \end{subfigure}
     \begin{subfigure}[b]{0.48\textwidth}
         \centering
         \includegraphics[width=\textwidth]{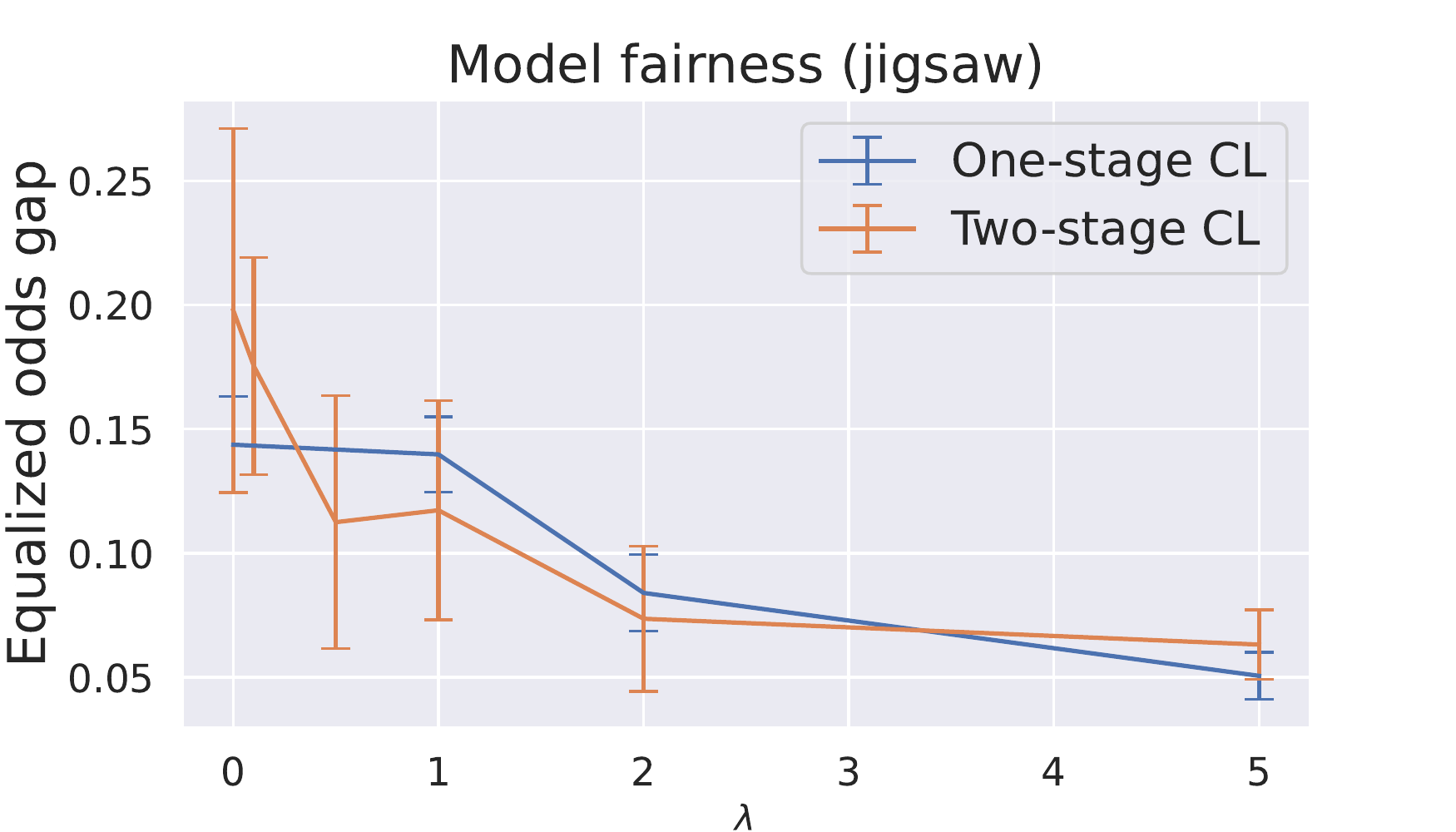}
     \end{subfigure}
     \hfill
     \begin{subfigure}[b]{0.48\textwidth}
         \centering
         \includegraphics[width=\textwidth]{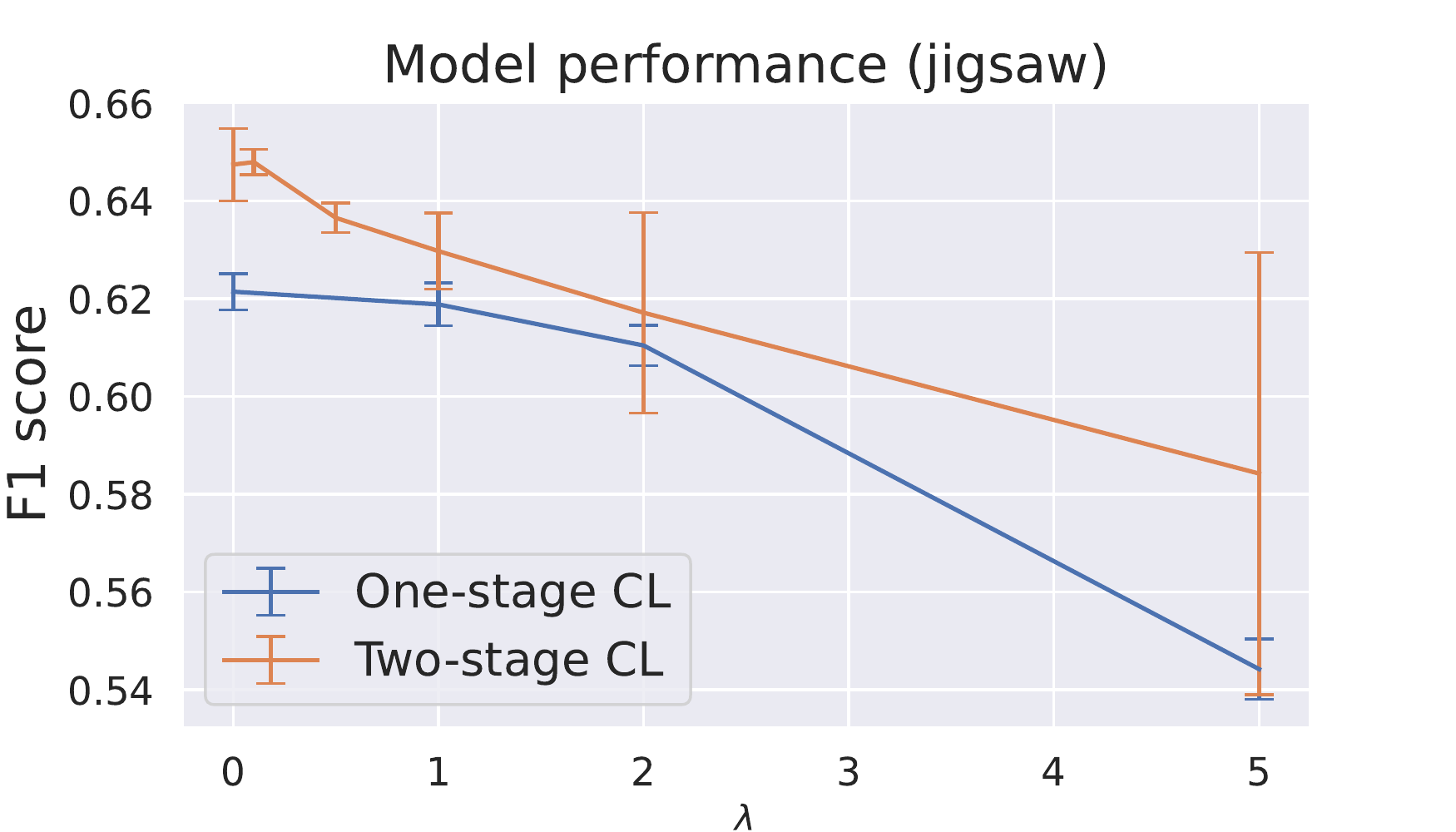}
     \end{subfigure}
     \caption{Classification performance and EO fairness of one-stage and two-stage CL when $\lambda$ changes. The equalized odds gaps shrink at the cost of model classification performance as $\lambda$ increases.}
     \label{fig:rq1}
\end{figure*}

\paragraph{RQ 2.} 
We study the trade-offs between model performance and EO fairness of our proposed methods compared to the baselines. Figure~\ref{fig:results:rq2-fair-perf} displays the performance and fairness of these methods under different hyperparameter settings for the \texttt{jigsaw} and \texttt{biasbios} datasets 
(trade-off parameters for all methods are described in more detail in Appendix~\ref{app:exp-details}).

\begin{figure*}[!t]
     \centering
     \begin{subfigure}[b]{0.48\textwidth}
         \centering
         \includegraphics[width=\textwidth]{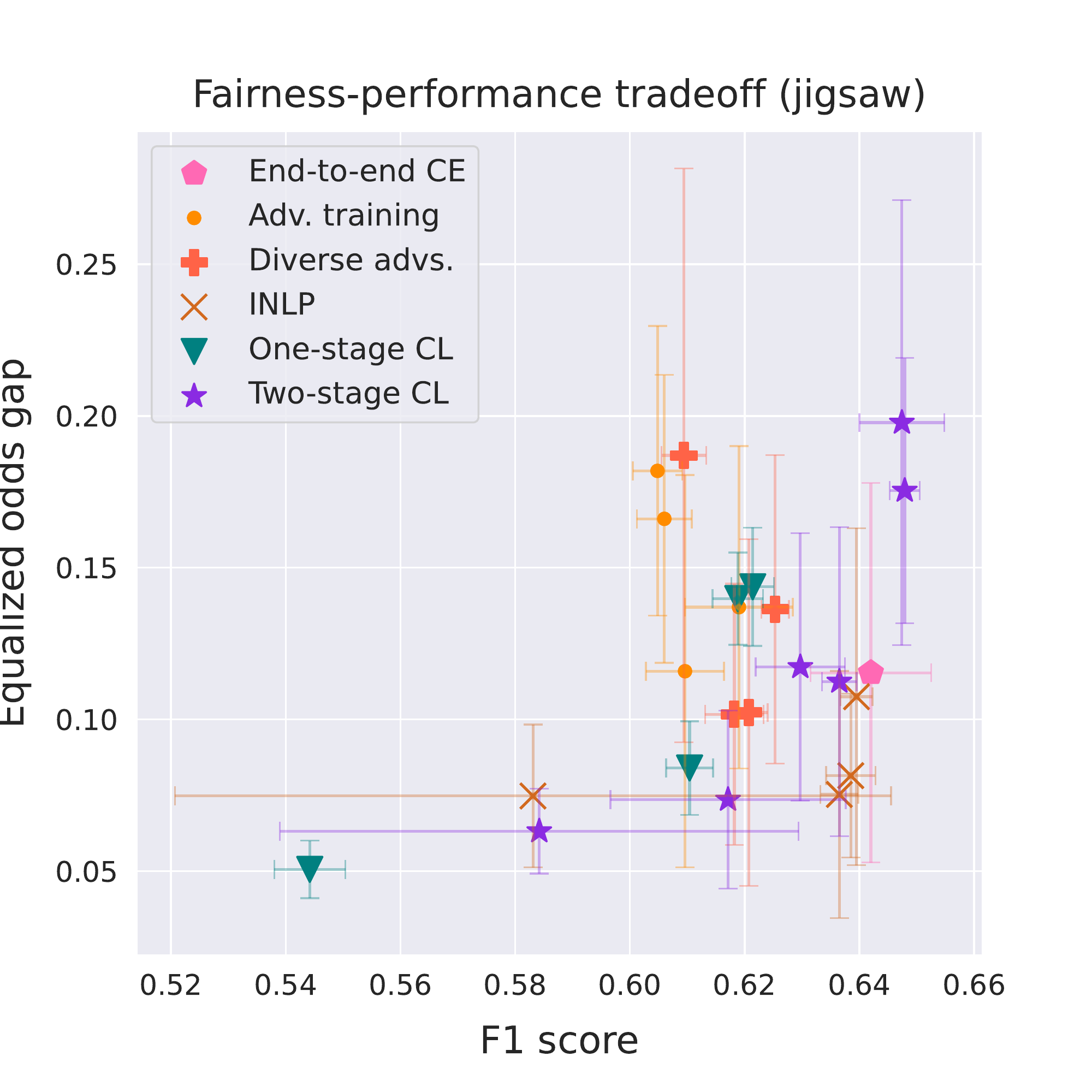}
     \end{subfigure}
     \hfill
     \begin{subfigure}[b]{0.48\textwidth}
         \centering
         \includegraphics[width=\textwidth]{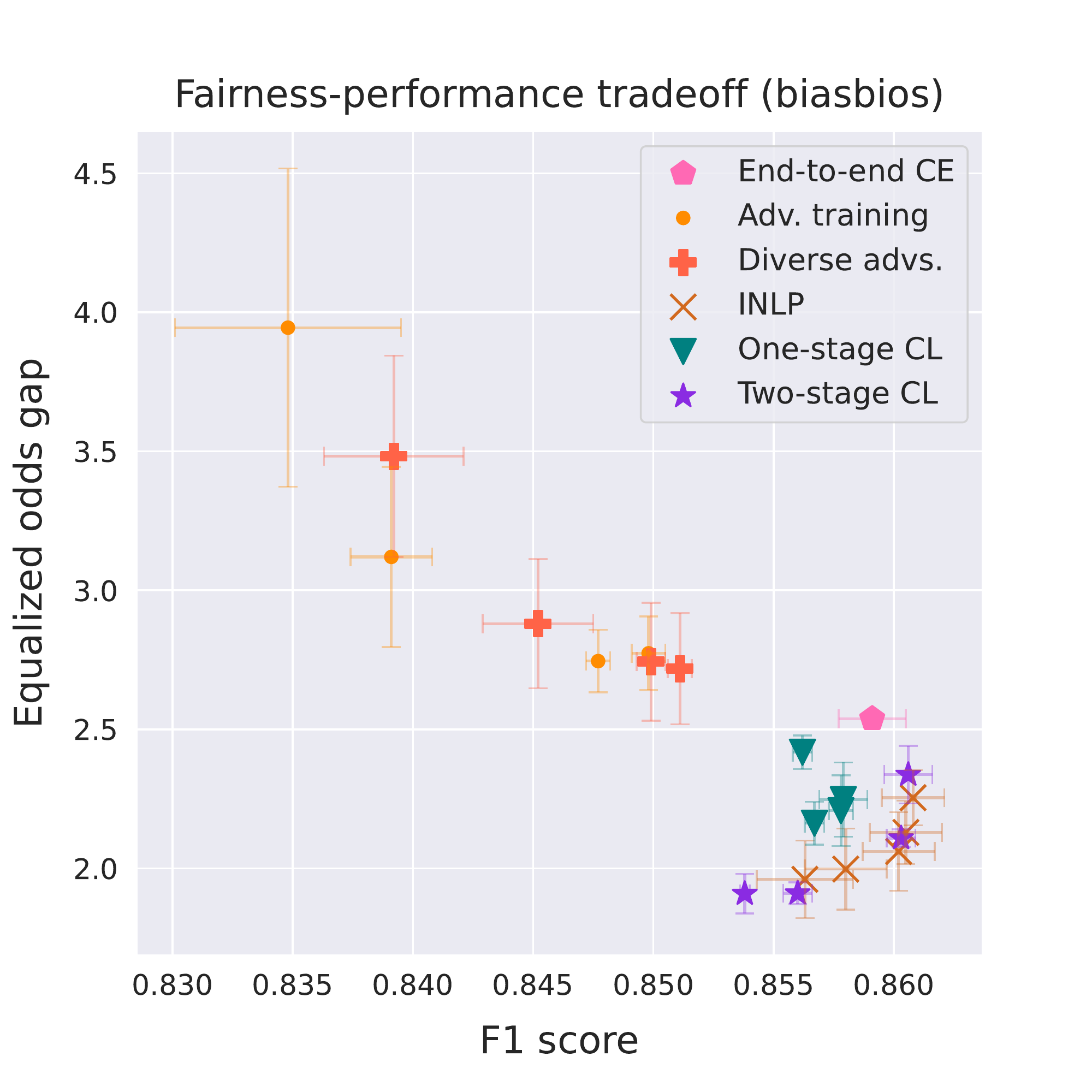}
     \end{subfigure}
     \caption{Classification performance and EO fairness and of our proposed methods compared against the baselines. Two-stage CL and INLP achieves the best performance and fairness trade-offs in general, and two-stage CL typically achieves more consistent results with lower variance.}
     \label{fig:results:rq2-fair-perf}
\end{figure*}

Among all methods, we find that two-stage CL and INLP achieve the best performance and fairness trade-offs. 
In the \texttt{biasbios} dataset, two-stage CL and INLP achieve similar performance and fairness trade-offs, and two-stage CL achieves more consistent results (\ie, lower variance). 
In the \texttt{jigsaw} dataset, two-stage CL achieves more flexible performance and fairness trade-offs as it reaches the highest model performance.  
Besides, when F1 scores are around 0.58, two-stage CL also achieves more consistent results and a lower EO gap.
Meanwhile, when F1 scores are between 0.62$\sim$0.64, INLP performs better.
We note that the effectiveness of INLP highly depends on the pretrained encoder for INLP (see Appendix~\ref{app-subsec:inlp} for the effects of different pre-training strategies for the text encoder in INLP), and a slight change in the text encoder could lead to a significant difference in the results, while CL-based methods target training the text encoder directly to ensure EO fairness and we demonstrate they are stable under hyperparameter changes (see RQ 3 below).

In comparison, the adversarial-training-based methods are relatively more unstable and consistently perform worse than CL-based methods and INLP, especially in the \texttt{biasbios} dataset.
Furthermore, both adversarial-training-based methods and INLP introduce additional model components (\eg, adversarial networks in adversarial-training-based methods and linear guarding layer in INLP) during training or inference, which complicates the actual implementation of the whole pipeline. In contrast, CL-based methods are well-suited to pre-training and fine-tuning paradigms in NLP applications.


\begin{figure*}[!ht]
    \begin{subfigure}[b]{\textwidth}
    \centering
    \includegraphics[width=\textwidth]{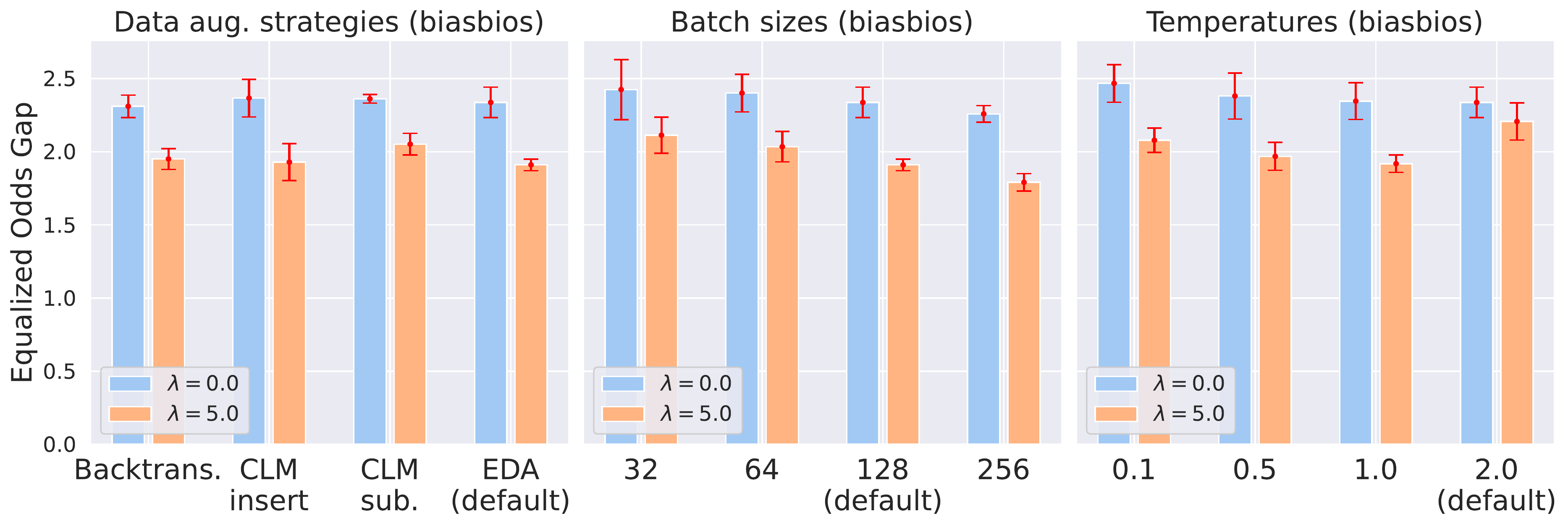}
    \end{subfigure} \begin{subfigure}[b]{\textwidth}
    \centering
    \includegraphics[width=\textwidth]{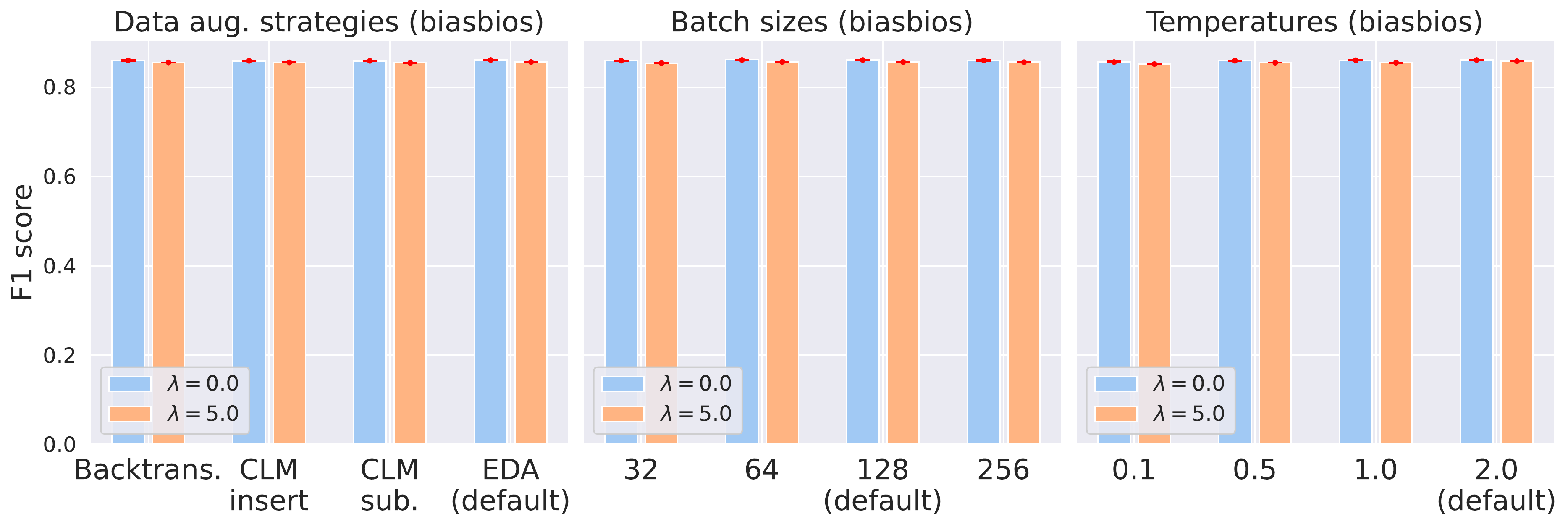}
    \end{subfigure}
    \caption{Sensitivity analysis of two-stage CL to key hyperparameter changes in (\texttt{biasbios}). ``Default'' in the X-axis indicates the default hyperparamter settings used in RQ 1 and RQ 2. }
    \label{fig:rq3-biasbios}
\end{figure*}

\paragraph{RQ 3.} 

We have shown that two-stage CL performs better than one-stage CL in RQ 1 and RQ 2. Thus, we choose two-stage CL to see if it is sensitive to key hyperparameter changes. As mentioned above, the performance of contrastive learning is closely related to temperature, (pre-training) batch size, and data augmentation strategy. Thus, we study whether the performance of two-stage CL is sensitive to these hyperparameters.

Figure~\ref{fig:rq3-biasbios} shows model performance and EO fairness of two-stage CL under different hyperparameter settings when $\lambda\in\{0.0, 5.0\}$ in the \texttt{biasbios} dataset (Figure~\ref{fig:rq3-jigsaw} for the \texttt{jigsaw} in Appendix~\ref{app-subsec:cl}). We see that two-stage CL are stable under a wide range of parameter settings: The equalized odds gaps are consistently decreasing when $\lambda=5.0$ and the F1 scores are relatively high.

\section{Related Work}

Unintended social biases in NLP models have been identified in word/sentence embedding~\citep{bolukbasi2016man, may2019measuring,zhao-etal-2019-gender} and applications such as coreference resolution~\citep{zhao-etal-2018-gender,rudinger-etal-2018-gender,cao-daume-iii-2020-toward}, language modeling~\citep{bordia2019identifying}, machine translation~\citep{stanovsky-etal-2019-evaluating}, and text classification~\cite{ball2021differential,baldini-etal-2022-fairness}.

In the literature, there are some recent works that aim to learn fair representations via contrastive learning~\citep{cheng2021fairfil, shen2021contrastive, tsai2021conditional, tsai2022conditional}.
Among these works,
\citet{cheng2021fairfil} propose contrastive objectives to learn debiased sentence embeddings that minimize the correlation between embedded sentences and biased words. In classification tasks,
\citet{tsai2021conditional, tsai2022conditional} proposed contrastive objectives to remove sensitive information; \citet{shen2021contrastive} proposed a similar contrastive objective to achieve a similar goal. According to~\citet{tsai2021conditional}, all those proposed contrastive objectives target demographic parity in principle. 

Our theoretical results involve key notions (\eg, entropy and mutual information) in information theory~\citep{cheng-etal-2020-improving, colombo-etal-2021-novel}.
Information-theoretic-based methods have been used for representation learning for NLP applications. For example, \citet{colombo-etal-2021-novel} proposed a variational upper bound of mutual information to learn disentangled textual representations for fair classification and style transfer. 

Compared to the previous work, our work uses equalized odds as the fairness criterion. To the best of our knowledge, our work is the first to connect the problem of learning fair representations with contrastive learning to ensure the EO constraint and explore its effectiveness for bias mitigation in text classification in large language models (\eg, BERT).

\section{Conclusion}

In this paper, we theoretically and empirically study how to leverage contrastive learning for fair text classification. Inspired by our theoretical results, we propose conditional supervised contrastive objectives to learn aligned and uniform representations while mixing the representation of different examples that share the same sensitive attribute for every task label.
We conduct experiments to demonstrate the effectiveness of our algorithms in learning fair representations for text classification and show that our methods are stable in different hyperparameter settings.
In the future, we plan to extend our algorithms to the settings of intersectional bias~\citep{pmlr-v80-kearns18a, yang2020fairness}.

\section*{Limitations}

Like most prior work~\citep{ravfogel-etal-2020-null, tsai2022conditional}, we conduct experiments on the binary sensitive attribute. Especially, we acknowledge that due to the limitation of the dataset, our analysis of gender bias only considers binary gender, which is not ideal~\cite{dev-etal-2021-harms}.
One interesting future direction is to extend our method to ensure fairness for intersectional groups~\citep{pmlr-v80-kearns18a, yang2020fairness}. 
In principle, our theory also holds in intersectional bias. However, the disproportionate distributions of the intersectional sensitive attributes might pose challenges in sampling negative examples in $L_{\text{CS-InfoNCE}}$. One possible solution is to use a memory bank to sample negative examples~\citep{Wu_2018_CVPR}.
We leave this analysis as future work as it is an important question that warrants an independent study. 


\section*{Ethics Statement}

This work aims for bias mitigation for text classification. Like other bias mitigation methods, it could help increase people's trust in NLP models.
For example, our methods could help reduce unintentional censoring (false positive cases) and debates or discomforts (false negative cases) in online comment toxicity classification (see Section~\ref{sec:background} for more details).
Our study targets equalized odds and do not capture all notions of bias (\eg, individual fairness) in text classification. These issues are universal to bias mitigation techniques and not particular to our use case.

\section*{Acknowledgements}
We thank the anonymous reviewers for their insightful comments. 
Jianfeng Chi, William Shand, and Yuan Tian acknowledge support from NSF \#1829004, \#1920462, \#2002985, a Facebook Faculty Fellowship, and a Google Research Scholar Award.
Yaodong Yu acknowledges support from the joint Simons Foundation-NSF DMS grant \#2031899. Han Zhao would like to thank the support from a Facebook research award. Kai-Wei Chang acknowledge support from NSF \#1927554 and a Sloan Research Fellow.

\bibliography{anthology,custom}

\begin{thebibliography}{65}
\expandafter\ifx\csname natexlab\endcsname\relax\def\natexlab#1{#1}\fi

\bibitem[{Baldini et~al.(2022)Baldini, Wei, Natesan~Ramamurthy, Singh, and
  Yurochkin}]{baldini-etal-2022-fairness}
Ioana Baldini, Dennis Wei, Karthikeyan Natesan~Ramamurthy, Moninder Singh, and
  Mikhail Yurochkin. 2022.
\newblock \href {https://aclanthology.org/2022.findings-acl.176} {Your fairness
  may vary: Pretrained language model fairness in toxic text classification}.
\newblock In \emph{Findings of the Association for Computational Linguistics:
  ACL 2022}, pages 2245--2262, Dublin, Ireland. Association for Computational
  Linguistics.

\bibitem[{Baldini et~al.(2021)Baldini, Wei, Ramamurthy, Yurochkin, and
  Singh}]{baldini2021your}
Ioana Baldini, Dennis Wei, Karthikeyan~Natesan Ramamurthy, Mikhail Yurochkin,
  and Moninder Singh. 2021.
\newblock Your fairness may vary: group fairness of pretrained language models
  in toxic text classification.
\newblock \emph{arXiv preprint arXiv:2108.01250}.

\bibitem[{Ball-Burack et~al.(2021)Ball-Burack, Lee, Cobbe, and
  Singh}]{ball2021differential}
Ari Ball-Burack, Michelle Seng~Ah Lee, Jennifer Cobbe, and Jatinder Singh.
  2021.
\newblock \href {https://doi.org/10.1145/3442188.3445875} {Differential
  tweetment: Mitigating racial dialect bias in harmful tweet detection}.
\newblock In \emph{Proceedings of the 2021 ACM Conference on Fairness,
  Accountability, and Transparency}, FAccT '21, page 116–128, New York, NY,
  USA. Association for Computing Machinery.

\bibitem[{Blodgett et~al.(2020)Blodgett, Barocas, Daum{\'e}~III, and
  Wallach}]{blodgett2020language}
Su~Lin Blodgett, Solon Barocas, Hal Daum{\'e}~III, and Hanna Wallach. 2020.
\newblock Language (technology) is power: A critical survey of “bias” in
  nlp.
\newblock In \emph{Proceedings of the 58th Annual Meeting of the Association
  for Computational Linguistics}, pages 5454--5476.

\bibitem[{Bolukbasi et~al.(2016)Bolukbasi, Chang, Zou, Saligrama, and
  Kalai}]{bolukbasi2016man}
Tolga Bolukbasi, Kai-Wei Chang, James~Y Zou, Venkatesh Saligrama, and Adam~T
  Kalai. 2016.
\newblock Man is to computer programmer as woman is to homemaker? debiasing
  word embeddings.
\newblock \emph{Advances in neural information processing systems},
  29:4349--4357.

\bibitem[{Bordia and Bowman(2019)}]{bordia2019identifying}
Shikha Bordia and Samuel~R Bowman. 2019.
\newblock Identifying and reducing gender bias in word-level language models.
\newblock In \emph{NAACL-HLT (Student Research Workshop)}.

\bibitem[{Borkan et~al.(2019)Borkan, Dixon, Sorensen, Thain, and
  Vasserman}]{borkan2019nuanced}
Daniel Borkan, Lucas Dixon, Jeffrey Sorensen, Nithum Thain, and Lucy Vasserman.
  2019.
\newblock \href {https://doi.org/10.1145/3308560.3317593} {Nuanced metrics for
  measuring unintended bias with real data for text classification}.
\newblock In \emph{Companion Proceedings of The 2019 World Wide Web
  Conference}, WWW '19, page 491–500, New York, NY, USA. Association for
  Computing Machinery.

\bibitem[{Cao and Daum{\'e}~III(2020)}]{cao-daume-iii-2020-toward}
Yang~Trista Cao and Hal Daum{\'e}~III. 2020.
\newblock \href {https://doi.org/10.18653/v1/2020.acl-main.418} {Toward
  gender-inclusive coreference resolution}.
\newblock In \emph{Proceedings of the 58th Annual Meeting of the Association
  for Computational Linguistics}, pages 4568--4595, Online. Association for
  Computational Linguistics.

\bibitem[{Chen et~al.(2020)Chen, Kornblith, Norouzi, and
  Hinton}]{pmlr-v119-chen20j}
Ting Chen, Simon Kornblith, Mohammad Norouzi, and Geoffrey Hinton. 2020.
\newblock \href {https://proceedings.mlr.press/v119/chen20j.html} {A simple
  framework for contrastive learning of visual representations}.
\newblock In \emph{Proceedings of the 37th International Conference on Machine
  Learning}, volume 119 of \emph{Proceedings of Machine Learning Research},
  pages 1597--1607. PMLR.

\bibitem[{Cheng et~al.(2021)Cheng, Hao, Yuan, Si, and Carin}]{cheng2021fairfil}
Pengyu Cheng, Weituo Hao, Siyang Yuan, Shijing Si, and Lawrence Carin. 2021.
\newblock \href {https://openreview.net/forum?id=N6JECD-PI5w} {Fairfil:
  Contrastive neural debiasing method for pretrained text encoders}.
\newblock In \emph{International Conference on Learning Representations}.

\bibitem[{Cheng et~al.(2020)Cheng, Min, Shen, Malon, Zhang, Li, and
  Carin}]{cheng-etal-2020-improving}
Pengyu Cheng, Martin~Renqiang Min, Dinghan Shen, Christopher Malon, Yizhe
  Zhang, Yitong Li, and Lawrence Carin. 2020.
\newblock \href {https://doi.org/10.18653/v1/2020.acl-main.673} {Improving
  disentangled text representation learning with information-theoretic
  guidance}.
\newblock In \emph{Proceedings of the 58th Annual Meeting of the Association
  for Computational Linguistics}, pages 7530--7541, Online. Association for
  Computational Linguistics.

\bibitem[{Colombo et~al.(2021)Colombo, Piantanida, and
  Clavel}]{colombo-etal-2021-novel}
Pierre Colombo, Pablo Piantanida, and Chlo{\'e} Clavel. 2021.
\newblock \href {https://doi.org/10.18653/v1/2021.acl-long.511} {A novel
  estimator of mutual information for learning to disentangle textual
  representations}.
\newblock In \emph{Proceedings of the 59th Annual Meeting of the Association
  for Computational Linguistics and the 11th International Joint Conference on
  Natural Language Processing (Volume 1: Long Papers)}, pages 6539--6550,
  Online. Association for Computational Linguistics.

\bibitem[{Cover(1999)}]{cover1999elements}
Thomas~M Cover. 1999.
\newblock \emph{Elements of information theory}.
\newblock John Wiley \& Sons.

\bibitem[{Cui et~al.(2021)Cui, Zhong, Liu, Yu, and Jia}]{cui2021parametric}
Jiequan Cui, Zhisheng Zhong, Shu Liu, Bei Yu, and Jiaya Jia. 2021.
\newblock Parametric contrastive learning.
\newblock In \emph{Proceedings of the IEEE/CVF International Conference on
  Computer Vision}, pages 715--724.

\bibitem[{De-Arteaga et~al.(2019)De-Arteaga, Romanov, Wallach, Chayes, Borgs,
  Chouldechova, Geyik, Kenthapadi, and Kalai}]{de2019bias}
Maria De-Arteaga, Alexey Romanov, Hanna Wallach, Jennifer Chayes, Christian
  Borgs, Alexandra Chouldechova, Sahin Geyik, Krishnaram Kenthapadi, and
  Adam~Tauman Kalai. 2019.
\newblock \href {https://doi.org/10.1145/3287560.3287572} {Bias in bios: A case
  study of semantic representation bias in a high-stakes setting}.
\newblock In \emph{Proceedings of the Conference on Fairness, Accountability,
  and Transparency}, FAT* '19, page 120–128, New York, NY, USA. Association
  for Computing Machinery.

\bibitem[{de~Vassimon~Manela et~al.(2021)de~Vassimon~Manela, Errington, Fisher,
  van Breugel, and Minervini}]{de2021stereotype}
Daniel de~Vassimon~Manela, David Errington, Thomas Fisher, Boris van Breugel,
  and Pasquale Minervini. 2021.
\newblock Stereotype and skew: Quantifying gender bias in pre-trained and
  fine-tuned language models.
\newblock In \emph{Proceedings of the 16th Conference of the European Chapter
  of the Association for Computational Linguistics: Main Volume}, pages
  2232--2242.

\bibitem[{Dev et~al.(2021)Dev, Monajatipoor, Ovalle, Subramonian, Phillips, and
  Chang}]{dev-etal-2021-harms}
Sunipa Dev, Masoud Monajatipoor, Anaelia Ovalle, Arjun Subramonian, Jeff
  Phillips, and Kai-Wei Chang. 2021.
\newblock \href {https://doi.org/10.18653/v1/2021.emnlp-main.150} {Harms of
  gender exclusivity and challenges in non-binary representation in language
  technologies}.
\newblock In \emph{Proceedings of the 2021 Conference on Empirical Methods in
  Natural Language Processing}, pages 1968--1994, Online and Punta Cana,
  Dominican Republic. Association for Computational Linguistics.

\bibitem[{Devlin et~al.(2019)Devlin, Chang, Lee, and
  Toutanova}]{devlin-etal-2019-bert}
Jacob Devlin, Ming-Wei Chang, Kenton Lee, and Kristina Toutanova. 2019.
\newblock \href {https://doi.org/10.18653/v1/N19-1423} {{BERT}: Pre-training of
  deep bidirectional transformers for language understanding}.
\newblock In \emph{Proceedings of the 2019 Conference of the North {A}merican
  Chapter of the Association for Computational Linguistics: Human Language
  Technologies, Volume 1 (Long and Short Papers)}, pages 4171--4186,
  Minneapolis, Minnesota. Association for Computational Linguistics.

\bibitem[{Dixon et~al.(2018)Dixon, Li, Sorensen, Thain, and
  Vasserman}]{dixon2018measuring}
Lucas Dixon, John Li, Jeffrey Sorensen, Nithum Thain, and Lucy Vasserman. 2018.
\newblock \href {https://doi.org/10.1145/3278721.3278729} {Measuring and
  mitigating unintended bias in text classification}.
\newblock In \emph{Proceedings of the 2018 AAAI/ACM Conference on AI, Ethics,
  and Society}, AIES '18, page 67–73, New York, NY, USA. Association for
  Computing Machinery.

\bibitem[{Dwork et~al.(2012)Dwork, Hardt, Pitassi, Reingold, and
  Zemel}]{dwork2012fairness}
Cynthia Dwork, Moritz Hardt, Toniann Pitassi, Omer Reingold, and Richard Zemel.
  2012.
\newblock Fairness through awareness.
\newblock In \emph{Proceedings of the 3rd innovations in theoretical computer
  science conference}, pages 214--226.

\bibitem[{Edunov et~al.(2018)Edunov, Ott, Auli, and
  Grangier}]{edunov-etal-2018-understanding}
Sergey Edunov, Myle Ott, Michael Auli, and David Grangier. 2018.
\newblock \href {https://doi.org/10.18653/v1/D18-1045} {Understanding
  back-translation at scale}.
\newblock In \emph{Proceedings of the 2018 Conference on Empirical Methods in
  Natural Language Processing}, pages 489--500, Brussels, Belgium. Association
  for Computational Linguistics.

\bibitem[{Elazar and Goldberg(2018)}]{elazar-goldberg-2018-adversarial}
Yanai Elazar and Yoav Goldberg. 2018.
\newblock \href {https://doi.org/10.18653/v1/D18-1002} {Adversarial removal of
  demographic attributes from text data}.
\newblock In \emph{Proceedings of the 2018 Conference on Empirical Methods in
  Natural Language Processing}, pages 11--21, Brussels, Belgium. Association
  for Computational Linguistics.

\bibitem[{Farnia and Tse(2016)}]{farnia2016minimax}
Farzan Farnia and David Tse. 2016.
\newblock A minimax approach to supervised learning.
\newblock \emph{Advances in Neural Information Processing Systems}, 29.

\bibitem[{Gao et~al.(2021{\natexlab{a}})Gao, Zhang, Han, and
  Callan}]{gao-etal-2021-scaling}
Luyu Gao, Yunyi Zhang, Jiawei Han, and Jamie Callan. 2021{\natexlab{a}}.
\newblock \href {https://doi.org/10.18653/v1/2021.repl4nlp-1.31} {Scaling deep
  contrastive learning batch size under memory limited setup}.
\newblock In \emph{Proceedings of the 6th Workshop on Representation Learning
  for NLP (RepL4NLP-2021)}, pages 316--321, Online. Association for
  Computational Linguistics.

\bibitem[{Gao et~al.(2021{\natexlab{b}})Gao, Yao, and Chen}]{gao2021simcse}
Tianyu Gao, Xingcheng Yao, and Danqi Chen. 2021{\natexlab{b}}.
\newblock {SimCSE}: Simple contrastive learning of sentence embeddings.
\newblock In \emph{Empirical Methods in Natural Language Processing (EMNLP)}.

\bibitem[{Gunel et~al.(2021)Gunel, Du, Conneau, and
  Stoyanov}]{gunel2021supervised}
Beliz Gunel, Jingfei Du, Alexis Conneau, and Veselin Stoyanov. 2021.
\newblock \href {https://openreview.net/forum?id=cu7IUiOhujH} {Supervised
  contrastive learning for pre-trained language model fine-tuning}.
\newblock In \emph{International Conference on Learning Representations}.

\bibitem[{Hadsell et~al.(2006)Hadsell, Chopra, and
  LeCun}]{hadsell2006dimensionality}
Raia Hadsell, Sumit Chopra, and Yann LeCun. 2006.
\newblock Dimensionality reduction by learning an invariant mapping.
\newblock In \emph{2006 IEEE Computer Society Conference on Computer Vision and
  Pattern Recognition}, volume~2, pages 1735--1742. IEEE.

\bibitem[{Han et~al.(2021)Han, Baldwin, and Cohn}]{han-etal-2021-diverse}
Xudong Han, Timothy Baldwin, and Trevor Cohn. 2021.
\newblock \href {https://doi.org/10.18653/v1/2021.eacl-main.239} {Diverse
  adversaries for mitigating bias in training}.
\newblock In \emph{Proceedings of the 16th Conference of the European Chapter
  of the Association for Computational Linguistics: Main Volume}, pages
  2760--2765, Online. Association for Computational Linguistics.

\bibitem[{Hardt et~al.(2016)Hardt, Price, and Srebro}]{hardt2016equality}
Moritz Hardt, Eric Price, and Nati Srebro. 2016.
\newblock Equality of opportunity in supervised learning.
\newblock \emph{Advances in neural information processing systems},
  29:3315--3323.

\bibitem[{He et~al.(2020)He, Fan, Wu, Xie, and Girshick}]{he2020momentum}
Kaiming He, Haoqi Fan, Yuxin Wu, Saining Xie, and Ross Girshick. 2020.
\newblock Momentum contrast for unsupervised visual representation learning.
\newblock In \emph{Proceedings of the IEEE/CVF Conference on Computer Vision
  and Pattern Recognition}, pages 9729--9738.

\bibitem[{Henaff(2020)}]{henaff2020data}
Olivier Henaff. 2020.
\newblock Data-efficient image recognition with contrastive predictive coding.
\newblock In \emph{International Conference on Machine Learning}, pages
  4182--4192. PMLR.

\bibitem[{Hutchinson et~al.(2020)Hutchinson, Prabhakaran, Denton, Webster,
  Zhong, and Denuyl}]{hutchinson2020social}
Ben Hutchinson, Vinodkumar Prabhakaran, Emily Denton, Kellie Webster, Yu~Zhong,
  and Stephen Denuyl. 2020.
\newblock Social biases in nlp models as barriers for persons with
  disabilities.
\newblock In \emph{Proceedings of the 58th Annual Meeting of the Association
  for Computational Linguistics}, pages 5491--5501.

\bibitem[{Kearns et~al.(2018)Kearns, Neel, Roth, and Wu}]{pmlr-v80-kearns18a}
Michael Kearns, Seth Neel, Aaron Roth, and Zhiwei~Steven Wu. 2018.
\newblock \href {https://proceedings.mlr.press/v80/kearns18a.html} {Preventing
  fairness gerrymandering: Auditing and learning for subgroup fairness}.
\newblock In \emph{Proceedings of the 35th International Conference on Machine
  Learning}, volume~80 of \emph{Proceedings of Machine Learning Research},
  pages 2564--2572. PMLR.

\bibitem[{Khosla et~al.(2020)Khosla, Teterwak, Wang, Sarna, Tian, Isola,
  Maschinot, Liu, and Krishnan}]{khosla2020supervised}
Prannay Khosla, Piotr Teterwak, Chen Wang, Aaron Sarna, Yonglong Tian, Phillip
  Isola, Aaron Maschinot, Ce~Liu, and Dilip Krishnan. 2020.
\newblock Supervised contrastive learning.
\newblock \emph{Advances in Neural Information Processing Systems}, 33.

\bibitem[{Kleinberg et~al.(2017)Kleinberg, Mullainathan, and
  Raghavan}]{Kleinberg2017InherentTI}
Jon~M. Kleinberg, Sendhil Mullainathan, and M.~Raghavan. 2017.
\newblock Inherent trade-offs in the fair determination of risk scores.
\newblock \emph{ArXiv}, abs/1609.05807.

\bibitem[{Kobayashi(2018)}]{kobayashi-2018-contextual}
Sosuke Kobayashi. 2018.
\newblock \href {https://doi.org/10.18653/v1/N18-2072} {Contextual
  augmentation: Data augmentation by words with paradigmatic relations}.
\newblock In \emph{Proceedings of the 2018 Conference of the North {A}merican
  Chapter of the Association for Computational Linguistics: Human Language
  Technologies, Volume 2 (Short Papers)}, pages 452--457, New Orleans,
  Louisiana. Association for Computational Linguistics.

\bibitem[{Koh et~al.(2021)Koh, Sagawa, Marklund, Xie, Zhang, Balsubramani, Hu,
  Yasunaga, Phillips, Gao, Lee, David, Stavness, Guo, Earnshaw, Haque, Beery,
  Leskovec, Kundaje, Pierson, Levine, Finn, and Liang}]{wilds2021}
Pang~Wei Koh, Shiori Sagawa, Henrik Marklund, Sang~Michael Xie, Marvin Zhang,
  Akshay Balsubramani, Weihua Hu, Michihiro Yasunaga, Richard~Lanas Phillips,
  Irena Gao, Tony Lee, Etienne David, Ian Stavness, Wei Guo, Berton~A.
  Earnshaw, Imran~S. Haque, Sara Beery, Jure Leskovec, Anshul Kundaje, Emma
  Pierson, Sergey Levine, Chelsea Finn, and Percy Liang. 2021.
\newblock {WILDS}: A benchmark of in-the-wild distribution shifts.
\newblock In \emph{International Conference on Machine Learning (ICML)}.

\bibitem[{Logeswaran and Lee(2018)}]{logeswaran2018efficient}
Lajanugen Logeswaran and Honglak Lee. 2018.
\newblock An efficient framework for learning sentence representations.
\newblock \emph{arXiv preprint arXiv:1803.02893}.

\bibitem[{Madras et~al.(2018)Madras, Creager, Pitassi, and
  Zemel}]{pmlr-v80-madras18a}
David Madras, Elliot Creager, Toniann Pitassi, and Richard Zemel. 2018.
\newblock \href {https://proceedings.mlr.press/v80/madras18a.html} {Learning
  adversarially fair and transferable representations}.
\newblock In \emph{Proceedings of the 35th International Conference on Machine
  Learning}, volume~80 of \emph{Proceedings of Machine Learning Research},
  pages 3384--3393. PMLR.

\bibitem[{May et~al.(2019)May, Wang, Bordia, Bowman, and
  Rudinger}]{may2019measuring}
Chandler May, Alex Wang, Shikha Bordia, Samuel~R Bowman, and Rachel Rudinger.
  2019.
\newblock On measuring social biases in sentence encoders.
\newblock In \emph{2019 Conference of the North American Chapter of the
  Association for Computational Linguistics: Human Language Technologies, NAACL
  HLT 2019}, pages 622--628. Association for Computational Linguistics (ACL).

\bibitem[{Nguyen et~al.(2010)Nguyen, Wainwright, and
  Jordan}]{Nguyen2010estimating}
XuanLong Nguyen, Martin~J. Wainwright, and Michael~I. Jordan. 2010.
\newblock \href {https://doi.org/10.1109/TIT.2010.2068870} {Estimating
  divergence functionals and the likelihood ratio by convex risk minimization}.
\newblock \emph{IEEE Transactions on Information Theory}, 56(11):5847--5861.

\bibitem[{Oord et~al.(2018)Oord, Li, and Vinyals}]{oord2018representation}
Aaron van~den Oord, Yazhe Li, and Oriol Vinyals. 2018.
\newblock Representation learning with contrastive predictive coding.
\newblock \emph{arXiv preprint arXiv:1807.03748}.

\bibitem[{Park et~al.(2022)Park, Lee, Lee, Hwang, Kim, and Byun}]{park2022fair}
Sungho Park, Jewook Lee, Pilhyeon Lee, Sunhee Hwang, Dohyung Kim, and Hyeran
  Byun. 2022.
\newblock Fair contrastive learning for facial attribute classification.
\newblock In \emph{Proceedings of the IEEE/CVF Conference on Computer Vision
  and Pattern Recognition}, pages 10389--10398.

\bibitem[{Poole et~al.(2019)Poole, Ozair, Van Den~Oord, Alemi, and
  Tucker}]{pmlr-v97-poole19a}
Ben Poole, Sherjil Ozair, Aaron Van Den~Oord, Alex Alemi, and George Tucker.
  2019.
\newblock \href {https://proceedings.mlr.press/v97/poole19a.html} {On
  variational bounds of mutual information}.
\newblock In \emph{Proceedings of the 36th International Conference on Machine
  Learning}, volume~97 of \emph{Proceedings of Machine Learning Research},
  pages 5171--5180. PMLR.

\bibitem[{Pruksachatkun et~al.(2021)Pruksachatkun, Krishna, Dhamala, Gupta, and
  Chang}]{pruksachatkun-etal-2021-robustness}
Yada Pruksachatkun, Satyapriya Krishna, Jwala Dhamala, Rahul Gupta, and Kai-Wei
  Chang. 2021.
\newblock \href {https://doi.org/10.18653/v1/2021.findings-acl.294} {Does
  robustness improve fairness? approaching fairness with word substitution
  robustness methods for text classification}.
\newblock In \emph{Findings of the Association for Computational Linguistics:
  ACL-IJCNLP 2021}, pages 3320--3331, Online. Association for Computational
  Linguistics.

\bibitem[{Ravfogel et~al.(2020)Ravfogel, Elazar, Gonen, Twiton, and
  Goldberg}]{ravfogel-etal-2020-null}
Shauli Ravfogel, Yanai Elazar, Hila Gonen, Michael Twiton, and Yoav Goldberg.
  2020.
\newblock \href {https://doi.org/10.18653/v1/2020.acl-main.647} {Null it out:
  Guarding protected attributes by iterative nullspace projection}.
\newblock In \emph{Proceedings of the 58th Annual Meeting of the Association
  for Computational Linguistics}, pages 7237--7256, Online. Association for
  Computational Linguistics.

\bibitem[{Rudinger et~al.(2018)Rudinger, Naradowsky, Leonard, and
  Van~Durme}]{rudinger-etal-2018-gender}
Rachel Rudinger, Jason Naradowsky, Brian Leonard, and Benjamin Van~Durme. 2018.
\newblock \href {https://doi.org/10.18653/v1/N18-2002} {Gender bias in
  coreference resolution}.
\newblock In \emph{Proceedings of the 2018 Conference of the North {A}merican
  Chapter of the Association for Computational Linguistics: Human Language
  Technologies, Volume 2 (Short Papers)}, pages 8--14, New Orleans, Louisiana.
  Association for Computational Linguistics.

\bibitem[{Shen et~al.(2021)Shen, Han, Cohn, Baldwin, and
  Frermann}]{shen2021contrastive}
Aili Shen, Xudong Han, Trevor Cohn, Timothy Baldwin, and Lea Frermann. 2021.
\newblock Contrastive learning for fair representations.
\newblock \emph{arXiv preprint arXiv:2109.10645}.

\bibitem[{Sheng et~al.(2021)Sheng, Chang, Natarajan, and
  Peng}]{sheng-etal-2021-societal}
Emily Sheng, Kai-Wei Chang, Prem Natarajan, and Nanyun Peng. 2021.
\newblock \href {https://doi.org/10.18653/v1/2021.acl-long.330} {Societal
  biases in language generation: Progress and challenges}.
\newblock In \emph{Proceedings of the 59th Annual Meeting of the Association
  for Computational Linguistics and the 11th International Joint Conference on
  Natural Language Processing (Volume 1: Long Papers)}, pages 4275--4293,
  Online. Association for Computational Linguistics.

\bibitem[{Stanovsky et~al.(2019)Stanovsky, Smith, and
  Zettlemoyer}]{stanovsky-etal-2019-evaluating}
Gabriel Stanovsky, Noah~A. Smith, and Luke Zettlemoyer. 2019.
\newblock \href {https://doi.org/10.18653/v1/P19-1164} {Evaluating gender bias
  in machine translation}.
\newblock In \emph{Proceedings of the 57th Annual Meeting of the Association
  for Computational Linguistics}, pages 1679--1684, Florence, Italy.
  Association for Computational Linguistics.

\bibitem[{Sun et~al.(2019)Sun, Gaut, Tang, Huang, ElSherief, Zhao, Mirza,
  Belding, Chang, and Wang}]{sun-etal-2019-mitigating}
Tony Sun, Andrew Gaut, Shirlyn Tang, Yuxin Huang, Mai ElSherief, Jieyu Zhao,
  Diba Mirza, Elizabeth Belding, Kai-Wei Chang, and William~Yang Wang. 2019.
\newblock \href {https://doi.org/10.18653/v1/P19-1159} {Mitigating gender bias
  in natural language processing: Literature review}.
\newblock In \emph{Proceedings of the 57th Annual Meeting of the Association
  for Computational Linguistics}, pages 1630--1640, Florence, Italy.
  Association for Computational Linguistics.

\bibitem[{Tsai et~al.(2022)Tsai, Li, Ma, Zhao, Zhang, Morency, and
  Salakhutdinov}]{tsai2022conditional}
Yao-Hung~Hubert Tsai, Tianqin Li, Martin~Q. Ma, Han Zhao, Kun Zhang,
  Louis-Philippe Morency, and Ruslan Salakhutdinov. 2022.
\newblock \href {https://openreview.net/forum?id=AAJLBoGt0XM} {Conditional
  contrastive learning with kernel}.
\newblock In \emph{International Conference on Learning Representations}.

\bibitem[{Tsai et~al.(2021)Tsai, Ma, Zhao, Zhang, Morency, and
  Salakhutdinov}]{tsai2021conditional}
Yao-Hung~Hubert Tsai, Martin~Q Ma, Han Zhao, Kun Zhang, Louis-Philippe Morency,
  and Ruslan Salakhutdinov. 2021.
\newblock Conditional contrastive learning: Removing undesirable information in
  self-supervised representations.
\newblock \emph{arXiv preprint arXiv:2106.02866}.

\bibitem[{Van~der Maaten and Hinton(2008)}]{van2008visualizing}
Laurens Van~der Maaten and Geoffrey Hinton. 2008.
\newblock Visualizing data using t-sne.
\newblock \emph{Journal of machine learning research}, 9(11).

\bibitem[{Wang and Isola(2020)}]{wang2020understanding}
Tongzhou Wang and Phillip Isola. 2020.
\newblock Understanding contrastive representation learning through alignment
  and uniformity on the hypersphere.
\newblock In \emph{International Conference on Machine Learning}, pages
  9929--9939. PMLR.

\bibitem[{Webster et~al.(2020)Webster, Wang, Tenney, Beutel, Pitler, Pavlick,
  Chen, Chi, and Petrov}]{webster2020measuring}
Kellie Webster, Xuezhi Wang, Ian Tenney, Alex Beutel, Emily Pitler, Ellie
  Pavlick, Jilin Chen, Ed~Chi, and Slav Petrov. 2020.
\newblock Measuring and reducing gendered correlations in pre-trained models.
\newblock \emph{arXiv preprint arXiv:2010.06032}.

\bibitem[{Wei and Zou(2019)}]{wei-zou-2019-eda}
Jason Wei and Kai Zou. 2019.
\newblock \href {https://doi.org/10.18653/v1/D19-1670} {{EDA}: Easy data
  augmentation techniques for boosting performance on text classification
  tasks}.
\newblock In \emph{Proceedings of the 2019 Conference on Empirical Methods in
  Natural Language Processing and the 9th International Joint Conference on
  Natural Language Processing (EMNLP-IJCNLP)}, pages 6382--6388, Hong Kong,
  China. Association for Computational Linguistics.

\bibitem[{Weidinger et~al.(2021)Weidinger, Mellor, Rauh, Griffin, Uesato,
  Huang, Cheng, Glaese, Balle, Kasirzadeh et~al.}]{weidinger2021ethical}
Laura Weidinger, John Mellor, Maribeth Rauh, Conor Griffin, Jonathan Uesato,
  Po-Sen Huang, Myra Cheng, Mia Glaese, Borja Balle, Atoosa Kasirzadeh, et~al.
  2021.
\newblock Ethical and social risks of harm from language models.
\newblock \emph{arXiv preprint arXiv:2112.04359}.

\bibitem[{Wu et~al.(2018)Wu, Xiong, Yu, and Lin}]{Wu_2018_CVPR}
Zhirong Wu, Yuanjun Xiong, Stella~X. Yu, and Dahua Lin. 2018.
\newblock Unsupervised feature learning via non-parametric instance
  discrimination.
\newblock In \emph{Proceedings of the IEEE Conference on Computer Vision and
  Pattern Recognition (CVPR)}.

\bibitem[{Yang et~al.(2020)Yang, Cisse, and Koyejo}]{yang2020fairness}
Forest Yang, Mouhamadou Cisse, and Sanmi Koyejo. 2020.
\newblock \href
  {https://proceedings.neurips.cc/paper/2020/file/29c0605a3bab4229e46723f89cf59d83-Paper.pdf}
  {Fairness with overlapping groups; a probabilistic perspective}.
\newblock In \emph{Advances in Neural Information Processing Systems},
  volume~33, pages 4067--4078. Curran Associates, Inc.

\bibitem[{Zhao et~al.(2019{\natexlab{a}})Zhao, Coston, Adel, and
  Gordon}]{zhao2019conditional}
Han Zhao, Amanda Coston, Tameem Adel, and Geoffrey~J Gordon.
  2019{\natexlab{a}}.
\newblock Conditional learning of fair representations.
\newblock In \emph{International Conference on Learning Representations}.

\bibitem[{Zhao and Gordon(2019)}]{han2019inherent}
Han Zhao and Geoff Gordon. 2019.
\newblock \href
  {https://proceedings.neurips.cc/paper/2019/file/b4189d9de0fb2b9cce090bd1a15e3420-Paper.pdf}
  {Inherent tradeoffs in learning fair representations}.
\newblock In \emph{Advances in Neural Information Processing Systems},
  volume~32. Curran Associates, Inc.

\bibitem[{Zhao et~al.(2019{\natexlab{b}})Zhao, Wang, Yatskar, Cotterell,
  Ordonez, and Chang}]{zhao-etal-2019-gender}
Jieyu Zhao, Tianlu Wang, Mark Yatskar, Ryan Cotterell, Vicente Ordonez, and
  Kai-Wei Chang. 2019{\natexlab{b}}.
\newblock \href {https://doi.org/10.18653/v1/N19-1064} {Gender bias in
  contextualized word embeddings}.
\newblock In \emph{Proceedings of the 2019 Conference of the North {A}merican
  Chapter of the Association for Computational Linguistics: Human Language
  Technologies, Volume 1 (Long and Short Papers)}, pages 629--634, Minneapolis,
  Minnesota. Association for Computational Linguistics.

\bibitem[{Zhao et~al.(2017)Zhao, Wang, Yatskar, Ordonez, and
  Chang}]{zhao-etal-2017-men}
Jieyu Zhao, Tianlu Wang, Mark Yatskar, Vicente Ordonez, and Kai-Wei Chang.
  2017.
\newblock \href {https://doi.org/10.18653/v1/D17-1323} {Men also like shopping:
  Reducing gender bias amplification using corpus-level constraints}.
\newblock In \emph{Proceedings of the 2017 Conference on Empirical Methods in
  Natural Language Processing}, pages 2979--2989, Copenhagen, Denmark.
  Association for Computational Linguistics.

\bibitem[{Zhao et~al.(2018)Zhao, Wang, Yatskar, Ordonez, and
  Chang}]{zhao-etal-2018-gender}
Jieyu Zhao, Tianlu Wang, Mark Yatskar, Vicente Ordonez, and Kai-Wei Chang.
  2018.
\newblock \href {https://doi.org/10.18653/v1/N18-2003} {Gender bias in
  coreference resolution: Evaluation and debiasing methods}.
\newblock In \emph{Proceedings of the 2018 Conference of the North {A}merican
  Chapter of the Association for Computational Linguistics: Human Language
  Technologies, Volume 2 (Short Papers)}, pages 15--20, New Orleans, Louisiana.
  Association for Computational Linguistics.

\end{thebibliography}
\bibliographystyle{acl_natbib}

\newpage

\appendix

\section{Omitted Proofs}
\label{app:proof}

\subsection{Proof of Lemma~\ref{prop:cmi-aug}}

\begin{proof}
By the definition of conditional mutual information: 
\begin{align*}
    &~I(Z'; Z \mid Y) - I(Z'; Z\mid A, Y)\\
    =&~ \big( H(Z\mid Y) - H (Z\mid Z', Y) \big) \\
    &~ - \big( H(Z\mid A, Y) - H (Z\mid Z', A, Y) \big) \\
    =&~\big(H(Z\mid Y) - H(Z\mid A, Y) \big) \\
    &~ + \big( H (Z\mid Z', A, Y) - H (Z\mid Z', Y) \big)  \\
    =&~I(Z; A\mid Y) \\
    & \hspace{0.5cm} + \big( H (Z\mid Z', A, Y) - H (Z\mid Z', Y) \big)  \\
    \leq&~I(Z; A\mid Y) + H (Z\mid Z', A, Y)  \\
    \leq&~I(Z; A\mid Y) + H (Z\mid Z',Y)  \\
    \leq&~I(Z; A\mid Y) + \epsilon.
\end{align*}

Next, we prove the opposite side,  
\begin{equation}
    \nonumber
    \begin{aligned}
    &~I(Z'; Z \mid Y) - I(Z'; Z\mid A, Y)\\
    =&~ \big( H(Z\mid Y) - H (Z\mid Z', Y) \big) \\
    &~ - \big( H(Z\mid A, Y) - H (Z\mid Z', A, Y) \big) \\
    =&~\big(H(Z\mid Y) - H(Z\mid A, Y) \big) \\
    &~ + \big( H (Z\mid Z', A, Y) - H (Z\mid Z', Y) \big)  \\
    =&~I(Z; A\mid Y) \\
    & \hspace{0.5cm} + \big( H (Z\mid Z', A, Y) - H (Z\mid Z', Y) \big)  \\
    \geq&~I(Z; A\mid Y) - H (Z\mid Z', Y) \\
    \geq&~I(Z; A\mid Y) - \epsilon,
    \end{aligned}
\end{equation}
which completes the proof.
\end{proof}

\subsection{Proof of Proposition~\ref{thm:cmi-sup-constrative-upper}}

\begin{proof}
\begin{equation}
    \nonumber
    \begin{aligned}
        I(& Z'; Z \mid Y) =  - H(Z' \mid Z, Y) + H(Z' \mid Y) \\
        =&~ \mathbb{E}_{p(y)} \big[ \mathbb{E}_{p(z', z\mid y)}[\log p(z'\mid z, y)] \\
        &~ - \mathbb{E}_{p(z'\mid y)}[\log p(z'\mid y)] \big]\\
        =&~ \mathbb{E}_{p(y)} \big[ \mathbb{E}_{p(z', z\mid y)}[\log p(z'\mid z, y)] \\
        &~ - \mathbb{E}_{p(z'\mid y)}[ \log \mathbb{E}_{p(z\mid y)}[ p(z'\mid z, y)] ] \big] \\
        \leq &~ \mathbb{E}_{p(y)} \big[ \mathbb{E}_{p(z', z\mid y)}[\log p(z'\mid z, y)] \\
        &~ - \mathbb{E}_{p(z'\mid y)}[\mathbb{E}_{p(z\mid y)}[\log p(z'\mid z, y)]] \big] \\
        \leq &~ - \mathbb{E}_{p(y)} \big[ \mathbb{E}_{p(z'\mid y)}[\mathbb{E}_{p(z\mid y)}[\log p(z'\mid z, y)]] \big] \\
    \end{aligned}
\end{equation}
where the second line follows the marginal of a joint distribution can be expressed as the expectation of the corresponding conditional distribution, and the third line follows Jensen's Inequality.

\end{proof}

\subsection{Proof of Proposition~\ref{thm:cmi-sup-constrative-lower}}

The proof techniques used in Proposition 3.2 follow in Proposition 2.4 in~\citet{tsai2021conditional}, which could be dated back to~\citet{oord2018representation, pmlr-v97-poole19a}. To make the paper self-contained, we include all the details of the lemmas to get the final results.

The proof of Proposition~\ref{thm:cmi-sup-constrative-lower} is dependent on the Lemmas~\ref{lemma:nguyen}-\ref{lemma:CMI} showed in Figures~\ref{fig:appendix-A-lemmas-1}~and~\ref{fig:appendix-A-lemmas-2} as well as Proposition~\ref{prop:weakCMI-CMI} in Figure~\ref{fig:proof-weakCMI-CMI}. Finally, we present the proof of Proposition~\ref{thm:cmi-sup-constrative-lower} in~Figure~\ref{fig:proof-thm-cmi-sup-contrastive-lower}.

\begin{figure*}
\begin{minipage}{\textwidth}
\begin{framed}

\begin{lemma}[\citep{Nguyen2010estimating}] Let $\zspace$ be the sample space for $Z'$ and $Z$, $s:\zspace\times\zspace \to \mathbb{R}$ be any function, and $\mathcal{P}$ and $\mathcal{Q}$ be the probability measures over $\zspace\times\zspace$. We have
\begin{equation}
    \nonumber
    \dkl(\mathcal{P} \| \mathcal{Q}) = \sup_{s} \mathbb{E}_{(z',z)\sim \mathcal{P}} [s(z',z)] - \mathbb{E}_{(z',z)\sim \mathcal{Q}} [\exp(s(z',z))] + 1
\end{equation}
\label{lemma:nguyen}
\end{lemma}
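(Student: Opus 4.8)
The plan is to establish the identity by a two-sided argument: show that the right-hand side supremum is always a lower bound for $\dkl(\mathcal{P}\|\mathcal{Q})$, and then exhibit a specific choice of $s$ that attains it. First I would fix the candidate optimizer. Writing $r(z',z) \defeq \frac{d\mathcal{P}}{d\mathcal{Q}}(z',z)$ for the Radon–Nikodym derivative (assuming $\mathcal{P}\ll\mathcal{Q}$, else both sides are $+\infty$), the natural guess is $s^\star(z',z) = \log r(z',z)$. Plugging this in, the second expectation becomes $\mathbb{E}_{\mathcal{Q}}[\exp(\log r)] = \mathbb{E}_{\mathcal{Q}}[r] = 1$, so the expression evaluates to $\mathbb{E}_{\mathcal{P}}[\log r] - 1 + 1 = \dkl(\mathcal{P}\|\mathcal{Q})$. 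This shows the supremum is at least $\dkl(\mathcal{P}\|\mathcal{Q})$.

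Next I would prove the matching upper bound, i.e. that for \emph{every} measurable $s$,
\[
\mathbb{E}_{\mathcal{P}}[s(z',z)] - \mathbb{E}_{\mathcal{Q}}[\exp(s(z',z))] + 1 \le \dkl(\mathcal{P}\|\mathcal{Q}).
\]
The clean way is to reduce to the elementary inequality $\log t \le t - 1$ for all $t>0$ (equivalently $\exp(u) \ge u+1$). Applying this pointwise with $u = s(z',z) - \log r(z',z)$ gives $\exp(s)/r \ge s - \log r + 1$, hence $\exp(s) \ge r\,(s - \log r + 1)$; taking $\mathbb{E}_{\mathcal{Q}}$ of both sides and using $\mathbb{E}_{\mathcal{Q}}[r\cdot h] = \mathbb{E}_{\mathcal{P}}[h]$ yields $\mathbb{E}_{\mathcal{Q}}[\exp(s)] \ge \mathbb{E}_{\mathcal{P}}[s] - \mathbb{E}_{\mathcal{P}}[\log r] + 1 = \mathbb{E}_{\mathcal{P}}[s] - \dkl(\mathcal{P}\|\mathcal{Q}) + 1$. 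Rearranging is exactly the desired bound. Combining the two directions gives the claimed equality, with the supremum attained at $s^\star = \log(d\mathcal{P}/d\mathcal{Q})$.

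The main obstacle — really the only subtlety — is handling the measure-theoretic edge cases cleanly: the case $\mathcal{P}\not\ll\mathcal{Q}$ (where $\dkl = +\infty$ and one must check the right side is also unbounded, e.g. by concentrating $s$ to $+\infty$ on the singular part), and justifying that $s^\star = \log r$ is an admissible competitor even when it is unbounded or when $\mathbb{E}_{\mathcal{P}}[\log r]$ requires care (integrability of the negative part follows from Jensen, the positive part may be truncated and a limiting argument applied). For the purposes of this paper these regularity issues are standard and I would simply cite \citet{Nguyen2010estimating}; the substantive content is the $\log t \le t-1$ argument above, which is short and self-contained.
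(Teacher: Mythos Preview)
Your proof is correct and, if anything, more careful than the paper's. The approaches differ: the paper argues via calculus of variations --- it checks that the second functional derivative $-\exp(s)\,d\mathcal{Q}$ is negative (so the objective is concave in $s$), sets the first functional derivative $d\mathcal{P}-\exp(s)\,d\mathcal{Q}$ to zero to obtain $s^\star=\log(d\mathcal{P}/d\mathcal{Q})$, and then evaluates. You instead establish the two inequalities directly: the lower bound by plugging in $s^\star=\log r$, and the upper bound for arbitrary $s$ via the elementary pointwise inequality $\exp(u)\ge u+1$ with $u=s-\log r$. Both routes identify the same optimizer and value; yours avoids having to justify the formal functional-derivative manipulations and makes the convex-duality structure (this is really Fenchel--Young for the pair $t\log t$ and $e^{s}$) explicit, while the paper's version is shorter but more heuristic. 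Your discussion of the $\mathcal{P}\not\ll\mathcal{Q}$ and integrability edge cases is a bonus the paper does not address.
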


\begin{proof}
We first get the second-order functional derivative of the objective: $- \exp(s(z',z)) \cdot d \mathcal{Q}$, which is negative and it implies there is a supreme value for the objective. Next, we set the first-order functional derivative of the objective to be zero:
\begin{equation}
    \nonumber
    d \mathcal{P} - \exp(s(z',z)) \cdot d \mathcal{Q} = 0. 
\end{equation}
Reorganizing the equation above we get the optimal similarity function $s^*(z',z) = \log(\frac{d \mathcal{P}}{d \mathcal{Q}})$. Plugging it into the original objective, we have
\begin{align*}
    &~ \mathbb{E}_{\mathcal{P}} [s^*(z',z)] - \mathbb{E}_{\mathcal{Q}}[\exp(s^*(z',z))] + 1  = \mathbb{E}_{\mathcal{P}} [\log(\frac{d \mathcal{P}}{d \mathcal{Q}})] = \dkl(\mathcal{P} \| \mathcal{Q}).
\end{align*}
\end{proof}

\begin{lemma}[Four-variable variant of Lemma~\ref{lemma:nguyen}]
Let $\zspace$ be the sample space for $Z'$ and $Z$, $\yspace$ be the sample space for $Y$, $\aspace$ be the sample space for $A$, $s:\zspace\times\zspace\times\yspace\times\aspace \to \mathbb{R}$ be any function, and $\mathcal{P}$ and $\mathcal{Q}$ be the probability measures over $\zspace\times\zspace\times\yspace\times\aspace$. We have
\begin{equation}
    \nonumber
    \dkl(\mathcal{P} \| \mathcal{Q}) = \sup_{s} \mathbb{E}_{(z',z, y, a)\sim \mathcal{P}} [s(z',z, y, a)] - \mathbb{E}_{(z',z, y, a)\sim \mathcal{Q}} [\exp(s(z',z, y, a))] + 1
\end{equation}
\label{lemma:nguyen-four}
\end{lemma}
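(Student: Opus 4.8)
The plan is to observe that Lemma~\ref{lemma:nguyen-four} is not genuinely a new result but a relabeling of Lemma~\ref{lemma:nguyen}: the proof of Lemma~\ref{lemma:nguyen} uses nothing about the product structure of $\zspace\times\zspace$ beyond its being a measurable space carrying two probability measures $\mathcal{P}$ and $\mathcal{Q}$ and a class of test functions $s$ into $\mathbb{R}$. So the cleanest route is to apply Lemma~\ref{lemma:nguyen} with the sample space $\zspace\times\zspace$ replaced by the combined space $\mathcal{W}\defeq\zspace\times\zspace\times\yspace\times\aspace$ and with $s$ ranging over measurable maps $\mathcal{W}\to\mathbb{R}$. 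Writing $w=(z',z,y,a)$ for a generic element of $\mathcal{W}$, Lemma~\ref{lemma:nguyen} yields $\dkl(\mathcal{P}\|\mathcal{Q})=\sup_s \mathbb{E}_{w\sim\mathcal{P}}[s(w)]-\mathbb{E}_{w\sim\mathcal{Q}}[\exp(s(w))]+1$, which is exactly the claimed identity once $s(w)$ is written out as $s(z',z,y,a)$.

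If a self-contained argument is preferred, the plan is to repeat the same three-step variational calculation verbatim on $\mathcal{W}$. First I would note that the objective functional $J[s]\defeq \mathbb{E}_{\mathcal{P}}[s]-\mathbb{E}_{\mathcal{Q}}[\exp(s)]+1$ is concave in $s$, since its second functional derivative is $-\exp(s)\,d\mathcal{Q}\le 0$, so any stationary point is a global maximizer. Next I would set the first functional derivative $d\mathcal{P}-\exp(s)\,d\mathcal{Q}$ equal to zero and solve, obtaining the optimal test function $s^\star=\log(d\mathcal{P}/d\mathcal{Q})$ (under $\mathcal{P}\ll\mathcal{Q}$; otherwise the supremum is $+\infty$, matching $\dkl(\mathcal{P}\|\mathcal{Q})=+\infty$). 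Finally I would substitute $s^\star$ back into $J$, use $\mathbb{E}_{\mathcal{Q}}[\exp(s^\star)]=\mathbb{E}_{\mathcal{Q}}[d\mathcal{P}/d\mathcal{Q}]=1$, and conclude $J[s^\star]=\mathbb{E}_{\mathcal{P}}[\log(d\mathcal{P}/d\mathcal{Q})]=\dkl(\mathcal{P}\|\mathcal{Q})$.

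There is essentially no obstacle here; the only points to be mildly careful about are the absolute-continuity caveat (the Radon--Nikodym derivative $d\mathcal{P}/d\mathcal{Q}$ exists only when $\mathcal{P}\ll\mathcal{Q}$) and the standard technical requirement that the relevant integrals be well-defined so that the supremum over measurable $s$ is attained at $s^\star$. These are precisely the conventions already implicit in Lemma~\ref{lemma:nguyen}, so for the downstream propositions it suffices to note that the four-variable version holds under the same assumptions. I would therefore present the one-line ``apply Lemma~\ref{lemma:nguyen} to the product space $\mathcal{W}$'' argument as the proof, optionally appending the repeated calculation for completeness.
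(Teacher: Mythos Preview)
Your proposal is correct and matches the paper's own proof essentially verbatim: the paper simply states that the proof technique is identical to that of Lemma~\ref{lemma:nguyen}, with the only difference being that the similarity function takes four variables as input. Your ``apply Lemma~\ref{lemma:nguyen} to the product space $\mathcal{W}$'' argument is exactly this, and the optional repeated variational calculation you sketch is the same three-step computation used there.
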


\begin{proof}
    The proof technique is identical to the proof of Lemma~\ref{lemma:nguyen} and the only difference is that the similarity function takes four variables as input.
\end{proof}

\begin{lemma}
$\sup_{s} \mathbb{E}_{ (z', z_1) \sim \mathcal{P}, (z', z_{2:N}) \sim \mathcal{Q}^{\otimes N-1} } \big[ \log \frac{\exp(s(z', z_1))}{\frac{1}{N} \sum_{j=1}^N \exp(s(z', z_j)) } \big] \leq \dkl(\mathcal{P} \| \mathcal{Q})$
\label{lemma:kl-lower}
\end{lemma}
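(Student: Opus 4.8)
\textbf{Proof proposal for Lemma~\ref{lemma:kl-lower}.}

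The plan is to reduce the multi-sample InfoNCE-type bound to the single-variable Donsker--Varadhan / Nguyen variational formula (Lemma~\ref{lemma:nguyen}) by a symmetrization argument over the $N$ sampled points. First I would observe that for any fixed similarity function $s$, the quantity inside the supremum can be rewritten by ``folding in'' the normalizing constant: define the modified critic $\tilde{s}(z', z) \defeq s(z', z) - \mathbb{E}_{(z', z_{2:N})\sim\mathcal{Q}^{\otimes N-1}}\big[\log \frac{1}{N}\sum_{j=1}^N \exp(s(z', z_j))\big]$ where $z_1 = z$. This is a standard trick (it appears in Poole et al.\ and in Tsai et al.'s Proposition 2.4): the point is that $\frac{1}{N}\sum_j \exp(s(z',z_j))$ plays the role of a (Monte Carlo estimate of a) partition function, and one wants to show the expected log of this estimate is controlled so that the bound collapses to $\dkl(\mathcal{P}\|\mathcal{Q})$.

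Concretely, the key steps in order: (1) Fix $s$; by Jensen's inequality applied to the concave $\log$, or rather by the arithmetic structure, bound $\mathbb{E}\big[\log\frac{1}{N}\sum_{j=1}^N \exp(s(z',z_j))\big]$ from below in a way that leaves a term amenable to Lemma~\ref{lemma:nguyen}. (2) Use the symmetry among $z_1,\dots,z_N$: although $z_1\sim\mathcal{P}(\cdot\mid z')$ and $z_{2:N}\sim\mathcal{Q}$, after expanding the log-sum one averages over which index is the ``positive'' one, and this symmetrization produces exactly the form $\mathbb{E}_{\mathcal{P}}[s] - \mathbb{E}_{\mathcal{Q}}[\exp(s)] + 1$ up to manipulation, or a quantity dominated by it. (3) Take the supremum over $s$ on both sides and invoke Lemma~\ref{lemma:nguyen} to identify the right-hand side as $\dkl(\mathcal{P}\|\mathcal{Q})$. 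An alternative cleaner route: directly substitute the near-optimal critic $s(z',z) = \log\frac{d\mathcal{P}}{d\mathcal{Q}}(z'\mid z)$ (or a clipped version), compute the log-sum-exp term explicitly — it becomes $\log\frac{1}{N}\sum_j \frac{d\mathcal{P}}{d\mathcal{Q}}(z',z_j)$ — and show via the law of large numbers heuristic / a Jensen bound that this is at least $0$ in expectation minus a correction, yielding the $\dkl$ upper bound after taking $\sup_s$.

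The main obstacle I expect is handling the $\frac{1}{N}\sum_{j=1}^N$ normalization term correctly: one must show that this empirical partition function, in expectation of its logarithm, does not ``overshoot'' in a way that would break the inequality — i.e., that $\mathbb{E}[\log(\text{partition estimate})] \le \log \mathbb{E}[\text{partition estimate}]$ via Jensen is used in the \emph{right direction} and that the remaining terms reassemble into the Nguyen functional. Getting the expectations over $\mathcal{P}$ versus $\mathcal{Q}^{\otimes N-1}$ to line up (the positive sample is drawn differently from the negatives) is the bookkeeping-heavy part, and is precisely where the cited proofs of Tsai et al.\ and Poole et al.\ do their work; I would follow that template closely rather than reinvent it.
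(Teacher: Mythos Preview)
Your high-level instinct is right --- one \emph{does} plug a normalized critic into Lemma~\ref{lemma:nguyen} and then uses exchangeability of the samples under $\mathcal{Q}$ --- but the concrete $\tilde{s}$ you wrote down is not the one that makes the argument go through, and the Jensen step you flag as the main obstacle is in fact unnecessary.

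The paper's proof conditions on the negatives $z_{2:N}$ \emph{first}. For each fixed realization of $z_{2:N}$ it applies Lemma~\ref{lemma:nguyen} with the \emph{sample-dependent} critic
\[
\tilde{s}_{z_{2:N}}(z',z_1)\;=\;s(z',z_1)-\log\tfrac{1}{N}\sum_{j=1}^N \exp\!\big(s(z',z_j)\big),
\]
so that $\dkl(\mathcal{P}\|\mathcal{Q})\geq \mathbb{E}_{\mathcal{P}}[\tilde{s}_{z_{2:N}}]-\mathbb{E}_{\mathcal{Q}}[\exp(\tilde{s}_{z_{2:N}})]+1$ holds pointwise in $z_{2:N}$. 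Taking the outer expectation over $z_{2:N}\sim\mathcal{Q}^{\otimes N-1}$ is trivial on the left (the KL is a constant). On the right, the crucial observation is that when $(z',z_1)\sim\mathcal{Q}$ as well, all of $z_1,\dots,z_N$ are exchangeable given $z'$, and hence
\[
\mathbb{E}\Big[\exp(\tilde{s}_{z_{2:N}})\Big]
=\mathbb{E}\Bigg[\frac{\exp(s(z',z_1))}{\tfrac{1}{N}\sum_{j=1}^N\exp(s(z',z_j))}\Bigg]=1
\]
\emph{exactly}, by symmetry in the index $j$. This cancels against the $+1$ and leaves precisely the InfoNCE expression as a lower bound on $\dkl$. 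No Jensen inequality is used anywhere.

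By contrast, your $\tilde{s}$ subtracts the \emph{expected} log-partition $\mathbb{E}_{z_{2:N}}[\log\frac{1}{N}\sum_j\exp(s(z',z_j))]$, producing a deterministic critic. If you then push that through Lemma~\ref{lemma:nguyen} you get $\mathbb{E}_{\mathcal{P}}[s]-c$ and $e^{-c}\,\mathbb{E}_{\mathcal{Q}}[e^{s}]$, and recovering the InfoNCE form from these requires relating $c=\mathbb{E}[\log(\cdot)]$ to $\log\mathbb{E}[\cdot]$ --- exactly the Jensen step you worry about, and it goes the wrong way for what you need. The fix is simply not to average inside the critic: keep $\tilde{s}$ random in $z_{2:N}$, apply Nguyen conditionally, and let exchangeability do the work.
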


\begin{proof}
    \begin{equation}
        \nonumber
        \begin{aligned}
        \dkl(& \mathcal{P} \| \mathcal{Q}) = \mathbb{E}_{(z', z_{2:N}) \sim \mathcal{Q}^{\otimes N-1} } \bigg[ \dkl(\mathcal{P} \| \mathcal{Q}) \bigg] \\
        &\geq  \mathbb{E}_{(z', z_{2:N}) \sim \mathcal{Q}^{\otimes N-1} } \bigg[ \mathbb{E}_{\mathcal{P}} [\log \frac{\exp(s^*(z',z))}{\frac{1}{N} \sum_{j=1}^N \exp(s(z', z_j)) }] - \mathbb{E}_{\mathcal{Q}}[\frac{\exp(s^*(z',z))}{\frac{1}{N} \sum_{j=1}^N \exp(s(z', z_j))}] + 1 \bigg] \\
        &=  \mathbb{E}_{(z', z_{2:N}) \sim \mathcal{Q}^{\otimes N-1} } \bigg[ \mathbb{E}_{\mathcal{P}} [\log \frac{\exp(s^*(z',z))}{\frac{1}{N} \sum_{j=1}^N \exp(s(z', z_j)) }] - 1 + 1 \bigg] \\
        &= \mathbb{E}_{ (z', z_1) \sim \mathcal{P}, (z', z_{2:N}) \sim \mathcal{Q}^{\otimes N-1} } \big[ \log \frac{\exp(s(z', z_1))}{\frac{1}{N} \sum_{j=1}^N \exp(s(z', z_j)) } \big],
        \end{aligned}
    \end{equation}
where the first line follows the fact that $\dkl(\mathcal{P} \| \mathcal{Q})$ is a constant, the second line follows Lemma~\ref{lemma:nguyen}, the third line follows the fact that $(z', z_1)$ and $(z', z_{2:N})$ are interchangeable when sampling from $\mathcal{Q}$. Thus, for any similarity function $s$, we have
\begin{equation}
    \nonumber
    \sup_{s} \mathbb{E}_{ (z', z_1) \sim \mathcal{P}, (z', z_{2:N}) \sim \mathcal{Q}^{\otimes N-1} } \big[ \log \frac{\exp(s(z', z_1))}{\frac{1}{N} \sum_{j=1}^N \exp(s(z', z_j)) } \big] \leq \dkl(\mathcal{P} \| \mathcal{Q})
\end{equation}
\end{proof}

\end{framed}
\end{minipage}
\caption{Lemmas required for the proof of Proposition~\ref{thm:cmi-sup-constrative-lower}.}
\label{fig:appendix-A-lemmas-1}
\end{figure*}
\begin{figure*}
\begin{minipage}{\textwidth}
\begin{framed}

\begin{lemma}
\begin{equation}
\nonumber
\begin{aligned}
&~\dkl( P_{Z', Z}~\|~ \mathbb{E}_{P_{A,Y}}[P_{Z'\mid A,Y} P_{Z\mid A,Y}] )\\ 
=&~\sup_{s} \mathbb{E}_{ (z', z) \sim P_{Z', Z}}[s(z', z)] - \mathbb{E}_{ (z', z) \sim \mathbb{E}_{P_{A,Y}}[P_{Z'\mid A,Y} P_{Z\mid A,Y}]}[\exp(s(z', z))] + 1.
\end{aligned}
\end{equation}
\label{lemma:weak-CMI}
\end{lemma}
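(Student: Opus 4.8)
The plan is to obtain this identity as an immediate instantiation of the two‑variable variational formula in Lemma~\ref{lemma:nguyen}. The only preparatory check is that the second argument of the divergence, $\mathcal{Q} \defeq \mathbb{E}_{P_{A,Y}}[P_{Z'\mid A,Y} P_{Z\mid A,Y}]$, is genuinely a probability measure on $\zspace\times\zspace$; once that is settled, Lemma~\ref{lemma:nguyen} applies verbatim with $\mathcal{P} \defeq P_{Z',Z}$ and this $\mathcal{Q}$, and the claim follows after substituting the definitions back in.

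First I would verify that $\mathcal{Q}$ is a probability measure. For each fixed $(a,y)$, the product $P_{Z'\mid A=a,Y=y}\otimes P_{Z\mid A=a,Y=y}$ is a probability measure on $\zspace\times\zspace$, being the product of two conditional distributions; $\mathcal{Q}$ is the $P_{A,Y}$‑mixture of these product measures, hence a convex combination of probability measures, hence itself a probability measure (equivalently, its density is nonnegative and integrates to $\mathbb{E}_{P_{A,Y}}[1]=1$). Thus $(\zspace\times\zspace,\mathcal{P},\mathcal{Q})$ is exactly the setting of Lemma~\ref{lemma:nguyen}.

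Second, I would apply Lemma~\ref{lemma:nguyen} directly: taking $s:\zspace\times\zspace\to\mathbb{R}$ to range over all measurable functions, it gives
\[
\dkl(\mathcal{P}\|\mathcal{Q}) = \sup_s \mathbb{E}_{(z',z)\sim\mathcal{P}}[s(z',z)] - \mathbb{E}_{(z',z)\sim\mathcal{Q}}[\exp(s(z',z))] + 1,
\]
which is precisely the asserted identity once $\mathcal{P}=P_{Z',Z}$ and $\mathcal{Q}=\mathbb{E}_{P_{A,Y}}[P_{Z'\mid A,Y}P_{Z\mid A,Y}]$ are plugged in. No new argument is needed; in particular the supremum (in the extended sense) is attained at $s^\ast=\log(d\mathcal{P}/d\mathcal{Q})$ exactly as in the proof of Lemma~\ref{lemma:nguyen}, which is what upgrades the relation from an inequality to an equality. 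Alternatively one could invoke the four‑variable variant Lemma~\ref{lemma:nguyen-four} and then restrict to similarity functions depending only on $(z',z)$, but that is unnecessary here.

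There is essentially no obstacle: the mathematical content lives entirely in Lemma~\ref{lemma:nguyen}, and the only mild subtlety is the measure‑theoretic bookkeeping — that $\mathcal{Q}$ is well defined as a mixture, and that when $\dkl(\mathcal{P}\|\mathcal{Q})=+\infty$ (i.e.\ $\mathcal{P}\not\ll\mathcal{Q}$) the right‑hand side is also $+\infty$, so the identity persists. In the final write‑up I would state this in one sentence: the lemma is Lemma~\ref{lemma:nguyen} applied to $\mathcal{P}=P_{Z',Z}$ and $\mathcal{Q}=\mathbb{E}_{P_{A,Y}}[P_{Z'\mid A,Y}P_{Z\mid A,Y}]$, which is a valid probability measure on $\zspace\times\zspace$.
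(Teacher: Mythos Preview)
Your proposal is correct and matches the paper's approach exactly: the paper's proof is the single sentence ``We use Lemma~\ref{lemma:nguyen} and substitute $\mathcal{P}$ and $\mathcal{Q}$ with $P_{Z', Z}$ and $\mathbb{E}_{P_{A,Y}}[P_{Z'\mid A,Y} P_{Z\mid A,Y}]$, respectively.'' Your additional verification that $\mathcal{Q}$ is a bona fide probability measure (as a mixture of product measures) is a useful sanity check that the paper omits, but otherwise the argument is identical.
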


\begin{proof}
   We use Lemma~\ref{lemma:nguyen} and substitute $\mathcal{P}$ and $\mathcal{Q}$ with $P_{Z', Z}$ and $\mathbb{E}_{P_{A,Y}}[P_{Z'\mid A,Y} P_{Z\mid A,Y}]$, respectively.
\end{proof}

\begin{lemma}
\begin{equation}
\nonumber
\begin{aligned}
&\, I(Z'; Z  \mid A, Y) \\
=&~\dkl ( P_{Z', Z, A, Y}~\|~ P_{A, Y} P_{Z'\mid A, Y}P_{Z\mid A, Y}) \\
=&~\sup_{s} \mathbb{E}_{ (z', z, a, y) \sim P_{Z', Z, A, Y}}[s(z', z, a, y)] \\
& \hspace{1cm} - \mathbb{E}_{(z', z, a, y) \sim P_{A, Y} P_{Z'\mid A, Y}P_{Z\mid A, Y}  }[\exp(s(z', z, a, y))] + 1.
\end{aligned}
\end{equation}
\label{lemma:CMI}
\end{lemma}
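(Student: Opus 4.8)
The final statement to prove is Lemma~\ref{lemma:CMI}, which expresses the conditional mutual information $I(Z'; Z \mid A, Y)$ both as a KL divergence between the joint distribution and a product of conditionals, and as a Donsker--Varadhan / Nguyen-style variational supremum.

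\textbf{Plan.} The proof is a two-step unpacking. First I would establish the identity $I(Z'; Z \mid A, Y) = \dkl\big( P_{Z',Z,A,Y} \,\|\, P_{A,Y} P_{Z'\mid A,Y} P_{Z\mid A,Y}\big)$. This is just the definition of conditional mutual information rewritten: by definition $I(Z';Z\mid A,Y) = \mathbb{E}_{P_{A,Y}}\big[\dkl\big(P_{Z',Z\mid A,Y}\,\|\,P_{Z'\mid A,Y}P_{Z\mid A,Y}\big)\big]$, and taking the outer expectation inside the KL by appending the common marginal $P_{A,Y}$ to both arguments yields exactly the divergence between $P_{Z',Z,A,Y}$ and $P_{A,Y}P_{Z'\mid A,Y}P_{Z\mid A,Y}$ over the four-variable sample space $\zspace\times\zspace\times\aspace\times\yspace$. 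The key algebraic fact here is the chain-rule-type identity $\dkl(P_{A,Y}\mu \,\|\, P_{A,Y}\nu) = \mathbb{E}_{P_{A,Y}}[\dkl(\mu\,\|\,\nu)]$ when the first coordinate marginal agrees, which follows by writing out the Radon--Nikodym derivative.

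\textbf{Second step.} Given the KL representation, I would apply Lemma~\ref{lemma:nguyen-four} (the four-variable variant of Nguyen et al.'s variational formula) directly, substituting $\mathcal{P} = P_{Z',Z,A,Y}$ and $\mathcal{Q} = P_{A,Y}P_{Z'\mid A,Y}P_{Z\mid A,Y}$, both viewed as probability measures on $\zspace\times\zspace\times\aspace\times\yspace$. This immediately gives the claimed supremum expression with the similarity function $s(z',z,a,y)$ taking all four arguments. Since Lemma~\ref{lemma:nguyen-four} is already available in the excerpt, this step is essentially a citation.

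\textbf{Anticipated obstacle.} There is no deep obstacle; this lemma is a bookkeeping result bridging the abstract two-distribution statements (Lemmas~\ref{lemma:nguyen} and~\ref{lemma:nguyen-four}) to the specific distributions appearing in conditional mutual information. The only point requiring a small amount of care is verifying that $\mathcal{Q} = P_{A,Y}P_{Z'\mid A,Y}P_{Z\mid A,Y}$ is genuinely a valid joint probability measure on the four-variable space (it is, being a product of a joint marginal and two conditional kernels) so that Lemma~\ref{lemma:nguyen-four} applies verbatim, and that the first equality's manipulation of conditioning is legitimate, which amounts to the absolute continuity condition $P_{Z',Z,A,Y} \ll P_{A,Y}P_{Z'\mid A,Y}P_{Z\mid A,Y}$ holding on the relevant support. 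I would state these as routine and move on.
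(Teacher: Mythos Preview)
Your proposal is correct and matches the paper's approach: the paper's proof is a one-line invocation of Lemma~\ref{lemma:nguyen-four} with the substitution $\mathcal{P} = P_{Z',Z,A,Y}$ and $\mathcal{Q} = P_{A,Y}P_{Z'\mid A,Y}P_{Z\mid A,Y}$, treating the first equality $I(Z';Z\mid A,Y) = \dkl(P_{Z',Z,A,Y}\,\|\,P_{A,Y}P_{Z'\mid A,Y}P_{Z\mid A,Y})$ as definitional. Your explicit justification of that first equality via the chain-rule identity for KL is a welcome clarification but not a different route.
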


\begin{proof}
    We use Lemma~\ref{lemma:nguyen-four} and substitute $\mathcal{P}$ and $\mathcal{Q}$ with $P_{Z', Z, A, Y}$ and $P_{A, Y} P_{Z'\mid A, Y}P_{Z\mid A, Y}$, respectively.
\end{proof}

\end{framed}
\end{minipage}
\caption{Lemmas required for the proof of Proposition~\ref{thm:cmi-sup-constrative-lower}.}
\label{fig:appendix-A-lemmas-2}
\end{figure*}
\begin{figure*}
\begin{minipage}{\textwidth}
\begin{framed}

\begin{proposition} \label{prop:weakCMI-CMI}
\begin{align*}
\dkl( & P_{Z', Z}~\|~\mathbb{E}_{P_{A,Y}}[P_{Z'\mid A,Y} P_{Z\mid A,Y}] ) \leq I(Z'; Z \mid A, Y)
\end{align*}
\end{proposition}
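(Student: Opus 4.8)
The plan is to show that the "weak" conditional mutual information on the left — which uses the mixture $\mathbb{E}_{P_{A,Y}}[P_{Z'\mid A,Y}P_{Z\mid A,Y}]$ as the reference measure rather than the full factorized measure $P_{A,Y}P_{Z'\mid A,Y}P_{Z\mid A,Y}$ — is dominated by the genuine conditional mutual information $I(Z';Z\mid A,Y)$. The cleanest route is via the variational (Donsker--Varadhan / Nguyen) representations that are already set up in Lemmas~\ref{lemma:weak-CMI} and \ref{lemma:CMI}: each side is written as a supremum over a class of test functions, and the left-hand supremum is over a strictly smaller class once we account for the extra conditioning variables $a,y$. So the strategy is to exhibit, for every admissible test function $s(z',z)$ in the left-hand variational problem, a test function $\tilde s(z',z,a,y)$ in the right-hand problem that achieves at least as large an objective value, from which $\sup_{\text{left}} \le \sup_{\text{right}}$ follows immediately.

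Concretely, I would take $\tilde s(z',z,a,y) \defeq s(z',z)$, i.e., the same function with the extra arguments ignored. Then the first term in Lemma~\ref{lemma:CMI}, $\mathbb{E}_{(z',z,a,y)\sim P_{Z',Z,A,Y}}[\tilde s(z',z,a,y)]$, equals $\mathbb{E}_{(z',z)\sim P_{Z',Z}}[s(z',z)]$ after marginalizing out $a,y$, which is exactly the first term in Lemma~\ref{lemma:weak-CMI}. For the second (subtracted) term, I need to compare $\mathbb{E}_{(z',z,a,y)\sim P_{A,Y}P_{Z'\mid A,Y}P_{Z\mid A,Y}}[\exp(s(z',z))]$ against $\mathbb{E}_{(z',z)\sim \mathbb{E}_{P_{A,Y}}[P_{Z'\mid A,Y}P_{Z\mid A,Y}]}[\exp(s(z',z))]$. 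But these two are in fact \emph{equal}: marginalizing the measure $P_{A,Y}P_{Z'\mid A,Y}P_{Z\mid A,Y}$ over $(a,y)$ gives precisely the mixture $\mathbb{E}_{P_{A,Y}}[P_{Z'\mid A,Y}P_{Z\mid A,Y}]$, and the integrand $\exp(s(z',z))$ does not depend on $(a,y)$. Hence the full objective in Lemma~\ref{lemma:CMI} evaluated at $\tilde s$ coincides with the objective in Lemma~\ref{lemma:weak-CMI} evaluated at $s$. Taking suprema and using that the right-hand supremum ranges over a superset of test functions (those genuinely depending on $a,y$) yields $\dkl(P_{Z',Z}\,\|\,\mathbb{E}_{P_{A,Y}}[P_{Z'\mid A,Y}P_{Z\mid A,Y}]) \le I(Z';Z\mid A,Y)$.

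An alternative, more measure-theoretic framing that avoids the variational machinery: this inequality is an instance of the data-processing / monotonicity property of KL divergence under the map that forgets $(a,y)$. One has $\dkl(P_{Z',Z,A,Y}\,\|\,P_{A,Y}P_{Z'\mid A,Y}P_{Z\mid A,Y}) \ge \dkl(P_{Z',Z}\,\|\,\nu)$ where $\nu$ is the pushforward of the product measure under $(z',z,a,y)\mapsto(z',z)$, and that pushforward is exactly $\mathbb{E}_{P_{A,Y}}[P_{Z'\mid A,Y}P_{Z\mid A,Y}]$; the left side is $I(Z';Z\mid A,Y)$ by definition. I would likely present the variational argument since the lemmas are already stated in that form and it keeps the appendix self-contained.

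The only step that requires genuine care — and the place where a reader might want more detail — is the identity that the $(a,y)$-marginal of $P_{A,Y}P_{Z'\mid A,Y}P_{Z\mid A,Y}$ equals the mixture $\mathbb{E}_{P_{A,Y}}[P_{Z'\mid A,Y}P_{Z\mid A,Y}]$; this is essentially a definition-unpacking of the mixture notation, but it is the conceptual crux, since it is precisely the difference between the reference measure still "remembering" the conditioning and the mixture having "averaged it out." Everything else (relabeling test functions, monotonicity of sup over an enlarged feasible set) is routine.
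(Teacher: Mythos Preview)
Your proposal is correct and follows essentially the same variational argument as the paper: rewrite both sides via Lemmas~\ref{lemma:weak-CMI} and~\ref{lemma:CMI}, embed each two-variable test function $s(z',z)$ as a four-variable function constant in $(a,y)$, verify that the objective values coincide (using exactly the marginalization identity you flag), and conclude by monotonicity of the supremum over the enlarged function class. Your alternative data-processing framing is also valid and arguably more direct, but the variational route matches the paper's presentation.
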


\begin{proof}
We have

\begin{align*}
    \nonumber
    &~\dkl( P_{Z', Z}~\|~ \mathbb{E}_{P_{A,Y}}[P_{Z'\mid A,Y} P_{Z\mid A,Y}] ) \\
    =&~\sup_{s} \mathbb{E}_{ (z', z) \sim P_{Z', Z}}[s(z', z)] - \mathbb{E}_{ (z', z) \sim \mathbb{E}_{P_{A,Y}}[P_{Z'\mid A,Y} P_{Z\mid A,Y}]}[\exp(s(z', z))] + 1 \\
    =&~ \sup_{s} \mathbb{E}_{ (z', z, a, y) \sim P_{Z', Z, A, Y}}[s(z', z)] - \mathbb{E}_{ (z', z, a, y) \sim P_{A,Y} P_{Z'\mid A,Y} P_{Z\mid A,Y}}[\exp(s(z', z))] + 1, \\
\end{align*}

where the first equation follows Lemma~\ref{lemma:weak-CMI}. Let $s^*(z', z)$ be the function when the supreme value is achieved and let $\hat{s}^*(z', z, a, y) = s^*(z', z), \forall~(a, y)\in P_{A, Y}$, and we have
\begin{equation}
    \nonumber
    \begin{aligned}
    &~\dkl( P_{Z', Z}~\|~ \mathbb{E}_{P_{A,Y}}[P_{Z'\mid A,Y} P_{Z\mid A,Y}] ) \\
    =&~ \sup_{s} \mathbb{E}_{ (z', z, a, y) \sim P_{Z', Z, A, Y}}[s(z', z)] - \mathbb{E}_{ (z', z, a, y) \sim P_{A,Y} P_{Z'\mid A,Y} P_{Z\mid A,Y}}[\exp(s(z', z))] + 1 \\
    =&~ \mathbb{E}_{ (z', z, a, y) \sim P_{Z', Z, A, Y}}[\hat{s}^*(z', z, a, y)] - \mathbb{E}_{ (z', z, a, y) \sim P_{A,Y} P_{Z'\mid A,Y} P_{Z\mid A,Y}}[\exp(\hat{s}^*(z', z, a, y)] + 1 \\
    \leq&~ \sup_{\hat{s}} \mathbb{E}_{ (z', z, a, y) \sim P_{Z', Z, A, Y}}[\hat{s}(z', z, a, y)] - \mathbb{E}_{ (z', z, a, y) \sim P_{A,Y} P_{Z'\mid A,Y} P_{Z\mid A,Y}}[\exp(\hat{s}(z', z, a, y)] + 1 \\
    =&~I(Z'; Z \mid A, Y), \\
    \end{aligned}
\end{equation}
where the last equation follows Lemma~\ref{lemma:CMI}.
\end{proof}
\end{framed}
\end{minipage}
\caption{Proposition~\ref{prop:weakCMI-CMI} and its proof.}
\label{fig:proof-weakCMI-CMI}
\end{figure*}

\begin{figure*}
\begin{minipage}{\textwidth}
\begin{framed}
\begin{proof}

Define two probability measures $\mathcal{P} = P_{Z', Z}$ and $\mathcal{Q} = \mathbb{E}_{P_{A,Y}}[P_{Z'\mid A,Y} P_{Z\mid A,Y}] $, we have 

\begin{equation}
    \nonumber
    \begin{aligned}
    &\,\mathbb{E}_{p(a, y)} \bigg[  \mathbb{E}_{p(z'_i, z_i \mid a, y)^{\otimes N}} \big[ \log \frac{\exp(s(z'_i, z_i))}{\frac{1}{N} \sum_{j=1}^N \exp(s(z'_i, z_j)) } \big] \bigg] \\
    =&~\mathbb{E}_{ (z', z_1) \sim \mathcal{P}, (z', z_{2:N}) \sim \mathcal{Q}^{\otimes N-1} } \big[ \log \frac{\exp(s(z', z_1))}{\frac{1}{N} \sum_{j=1}^N \exp(s(z', z_j)) } \big] \\
    \leq&~\dkl(\mathcal{P}  \| \mathcal{Q}) \\
    =&~ \dkl( P_{Z', Z}~\|~ \mathbb{E}_{P_{A,Y}}[P_{Z'\mid A,Y} P_{Z\mid A,Y}] )  \\
    \leq&~ I(Z'; Z \mid A, Y).  \\
    \end{aligned}
\end{equation}
where the second equation follows Lemma~\ref{lemma:kl-lower} and the last equation follows Proposition~\ref{prop:weakCMI-CMI}. 

\end{proof}
\end{framed}
\end{minipage}
\caption{Proof of Proposition~\ref{thm:cmi-sup-constrative-lower}.}
\label{fig:proof-thm-cmi-sup-contrastive-lower}
\end{figure*}

\section{Experimental Details}
\label{app:exp-details}

\subsection{Data prepossessing pipelines}

\paragraph{Jigsaw.} Our first dataset, which we refer to as \texttt{jigsaw}, is a corpus of comments from an online forum associated with a toxicity rating. 
\texttt{jigsaw}'s main task is binary classification: given a ``toxicity'' score in the range \([0, 1]\) that has been assigned to each comment, we determine whether the ``toxicity'' score is greater or equal to 0.5. 
Each comment is also annotated with some ``identity'' labels, indicating whether some identities belonging to specific demographic groups are mentioned in the comment. We focus on the identity labels related to ``race or ethnicity'' and binaries the identity labels into black and non-black.
Note that there are other sensitive attributes in the \textit{Jigsaw-Toxicity} dataset, and we constrain the scope of our study to the ``race'' attributes present in text classification datasets.
We follow~\citep{wilds2021} to perform the train/val/test splits. 
The data with ``race or ethnicity'' identity labels are split into training, validation, and test sets, summarized in Table~\ref{table:appendix-B:jigsaw-split-summary}. 

\begin{table*}[h]
    \centering
    \caption{Summary of training, validation, and test splits for the \texttt{jigsaw} dataset.}
    \begin{tabular}{ p{2cm} | p{3cm} | p{4cm} | c }
    \hline
    \textbf{Data split} & \textbf{Samples} & \textbf{Protected attribute} \newline (NB = non-black, B = black) & \textbf{Task label average} \\ \hline \hline
    Training & 25,954 (60.5\%) & 61.9\% (NB), 38.1\% (B) & 0.2822 \\ \hline
    Validation & 4,390 (10.2\%) & 62.4\% (NB), 37.6\% (B) & 0.2897 \\ \hline
    Test & 12,562 (29.3\%) & 61.2\% (NB), 38.8\% (B) & 0.2873 \\ \hline \hline
    \textbf{Total} & 42,906 & 61.7\% (NB), 38.3\% (B) & 0.2844 \\ \hline
    \end{tabular}
    \label{table:appendix-B:jigsaw-split-summary}
\end{table*}

\paragraph{Bias-in-Bios.} To measure model fairness and performance in the multi-class classification setting, we use the professional biographies dataset of \cite{de2019bias}, which we refer to as the \texttt{biasbios} dataset. The data consist of nearly 400,000 online biographies collected from the Common Crawl corpus. These biographies are annotated with one of the 28 professions to which their subject belongs. The data are mapped to a binary gender based on the occurrence of gendered pronouns and are scrubbed to exclude the authors' names and pronouns. It is worth noting that mapping gender to binary labels is a strong simplified assumption to map data to a demographic label cleanly; it ignores people who do not identify as female or male, as well as the complexity of gender identity more generally. We refer readers to the original work~\citep{de2019bias} for further discussion of these issues. For our experiments, we attempt to predict the profession as our task label while protecting against the gender attribute. We replicate the splits of \texttt{biasbios} used by \cite{ravfogel-etal-2020-null}, which are summarized in Table \ref{table:appendix-B:biasbios-split-summary}. 

\begin{table*}
    \centering
    \caption{Summary of the training, validation, and test splits of the \texttt{biasbios} dataset.}
    \begin{tabular}{ p{2cm} | p{3cm} | p{4cm} }
    \hline
    \textbf{Data split} & \textbf{Samples} & \textbf{Protected attribute} \newline (F = female, M = male) \\ \hline \hline
    Training & 255,710 (65.0\%) & 46.0\% (F), 54.0\% (M) \\ \hline
    Validation & 39,369 (10.0\%) & 47.8\% (F), 52.2\% (M) \\ \hline
    Test & 98,344 (25.0\%) & 46.5\% (F), 53.5\% (M) \\ \hline \hline
    \textbf{Total} & 393,423 & 46.3\% (F), 53.7\% (M) \\ \hline
    \end{tabular}
    \label{table:appendix-B:biasbios-split-summary}
\end{table*}


\subsection{Detailed Implementations and Hyperparameter Settings}

In this section, we provide more details on our implementations and give the hyperparameters we use in our experiments. We first detail how we tune each method's performance and fairness trade-offs.

\begin{itemize}
    \item \textbf{One-stage / Two-stage CL.} for one- and two-stage CL, once we determine the best classification performance by conducting a grid search on temperature, (pre-training) batch size, and data augmentation strategies (as well as $\gamma$ in one-stage CL) , we only tune the parameter \(\lambda\) described in Sec.~\ref{subsec:methodology:practical-implementation}, which affects the trade-offs between supervised contrastive loss \(L_{\text{sup}}\) and the conditional supervised InfoNCE loss \(L_{\text{CS-InfoNCE}}\). For two-stage CL, we set the pre-training epochs to be 15 and 25 for \texttt{jigsaw} and \texttt{biasbios}, respectively, and early stop the pre-training if there is no improvement on the validation set for three consecutive epochs.

    \item \textbf{Diverse adversarial training.} Following~\citet{han-etal-2021-diverse}, we use an ensemble of three adversarial discriminators and the same adversarial network architecture. There are two hyperparameters of interest: \(\lambda_{diff}\) and \(\lambda_{adv}\). \(\lambda_{diff}\) is a difference loss hyperparameter that encourages discriminators to learn orthogonal representations. \(\lambda_{adv}\) affects the trade-offs between task performance and fairness. We first do a grid search on $\lambda_{diff}=\{0, 100, 1000, 5000\}$ and vary the values of \(\lambda_{adv}\) to determine the best hyperparameter configurations.
    
    \item \textbf{Adversarial training.} The implementation is nearly identical to diverse adversarial training, except that there is just one adversarial discriminator. 
    
    \item \textbf{INLP.} Following~\citet{ravfogel-etal-2020-null}, we use the weights of an SVM classifier as the parameters of the linear guarding layer and follow the same hyperparameters in training the linear guarding layer. The trade-off hyperparameter that we tune for INLP is \(N_{clf}\), which is the number of classifiers trained by INLP (\ie, the number of rounds).
\end{itemize}

Table~\ref{table:appendix-B:jigsaw-method-hyperparameters} contains the trade-off hyperparameters used for our experiments on the \texttt{jigsaw} dataset, while Table~\ref{table:appendix-B:biasbios-method-hyperparameters} summarizes the trade-off hyperparameter choices for the \texttt{biasbios} dataset. The remaining hyperparamters for all methods are listed in Table~\ref{table:appendix-B:additional-hyperparameters}.

\begin{table*}[h!]
    \centering
    \caption{Trade-off hyperparameters tested for RQ2 (Figure \ref{fig:results:rq2-fair-perf}) for the \texttt{jigsaw} dataset.}
    \label{table:appendix-B:jigsaw-method-hyperparameters}
    \begin{tabular}{ p{5cm} | p{7cm} }
    \hline
    \textbf{Method} & \textbf{Hyperparameters tested} \\ \hline \hline
    Adversarial training & \(\lambda_{adv} \in \{0.1, 0.5, 1, 2\}\) \\ \hline
    Diverse adversarial training \newline (\(N_{adv} = 3\), \(\lambda_{diff} = 100\)) & \(\lambda_{adv} \in \{0.1, 0.5, 1, 2\}\) \\ \hline
    INLP & \(N_{clf} \in \{20, 50, 80, 100, 150\}\) \\ \hline
    One-stage CL & \(\lambda \in \{0, 1, 2, 5\}\) \\ \hline
    Two-stage CL & \(\lambda \in \{0, 0.1, 0.5, 1, 2, 5\}\) \\ \hline
    \end{tabular}
\end{table*}

\begin{table*}[h!]
    \centering
    \caption{Trade-off hyperparameters tested for RQ2 (Figure \ref{fig:results:rq2-fair-perf}) for the \texttt{biasbios} dataset.}
    \begin{tabular}{ p{5cm} | p{7cm} }
    \hline
    \textbf{Method} & \textbf{Hyperparameters tested} \\ \hline \hline
    Adversarial training & \(\lambda_{adv} \in \{0.1, 0.2, 0.5, 1\}\) \\ \hline
    Diverse adversarial training \newline (\(\lambda_{diff} = 5000\)) & \(\lambda_{adv} \in \{0.1, 0.2, 0.5, 1\}\) \\ \hline
    INLP & \(N_{clf} \in \{20, 50, 100, 300, 400\}\) \\ \hline
    One-stage CL & \(\lambda \in \{0, 1, 5, 10\}\) \\ \hline
    Two-stage CL & \(\lambda \in \{0, 1, 5, 10\}\) \\ \hline
    \end{tabular}
    \label{table:appendix-B:biasbios-method-hyperparameters}
\end{table*}

\begin{table*}[h!]
    \centering
    \caption{Additional hyperparameters used for experiments.}
    \begin{tabular}{ c |
    >{\centering}p{0.30\textwidth} |
    >{\centering\arraybackslash}p{0.30\textwidth} }
    \hline
    \textbf{Hyperparameter} & (\texttt{jigsaw}) & (\texttt{biasbios}) \\ \hline \hline
    Batch size (fine-tuning) & 32 & 32 \\ \hline
    Learning rate & 2e-5 & 2e-5 \\ \hline
    Epochs & 10 & 7 \\ \hline
    Optimizer & Adam & Adam \\ \hline
    \end{tabular}
    \label{table:appendix-B:additional-hyperparameters}
\end{table*}

\subsection{Data Augmentation Strategies}

In this section, we provide a description of the data augmentation strategies used in CL-based methods\footnote{We use the implementations of data augmentation at: \url{https://github.com/makcedward/nlpaug}.}.
\begin{itemize}
    \item \textbf{Easy data augmentation (EDA)\citep{wei-zou-2019-eda}.} EDA consists of four simple operations: synonym replacement, random insertion, random swap, and random deletion. Following the suggestions provided by the original paper, we choose the augmentation ratio to be 0.1 and create four augmented examples per example. 
    \item \textbf{Back translation~\citep{edunov-etal-2018-understanding}.} It first translates the input example to another language and back to English. We use the machine translation model \texttt{wmt19-en-de} in our experiment. 
    \item \textbf{Word replacement using contextual language model (CLM insert)~\citep{kobayashi-2018-contextual}.} It replaces words based on a language model that leverages contextual word embeddings to find the most similar word for augmentation. We use the RoBERTa-base language model and choose the augmentation rate of 0.1.
    \item \textbf{Word insertion using contextual language model (CLM insert)~\citep{kobayashi-2018-contextual}.} It inserts words based on a language model that leverages contextual word embeddings to find the most similar word for augmentation. We use the RoBERTa-base language model and choose the augmentation rate of 0.1.
\end{itemize}

\section{Additional Experimental Results}
\label{app:add-exp-res}

\subsection{More Comments for CL-based Methods}
\label{app-subsec:cl}


\begin{figure*}
    \begin{subfigure}[!thb]{\textwidth}
    \centering
    \includegraphics[width=1\textwidth]{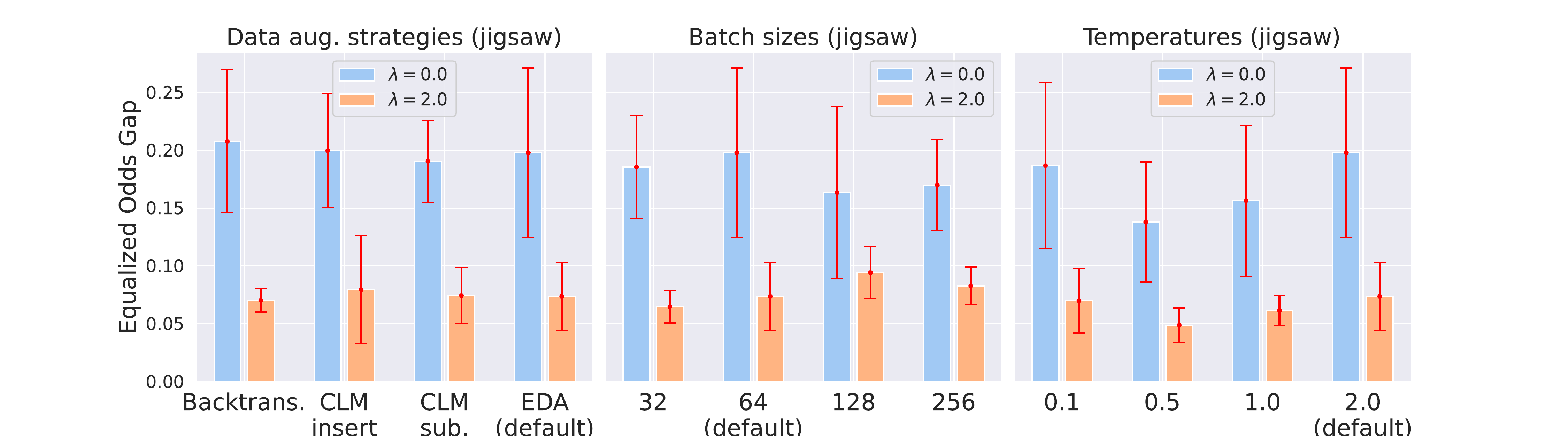}
    \end{subfigure} \begin{subfigure}[b]{\textwidth}
    \centering
    \includegraphics[width=\textwidth]{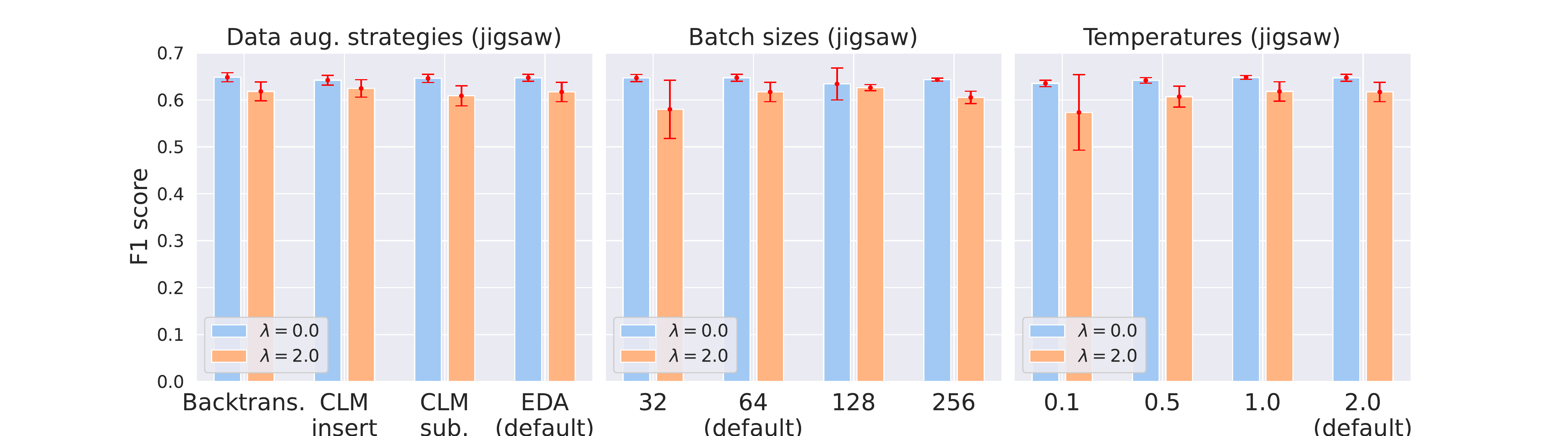}
    \end{subfigure}
    \caption{Sensitivity analysis of two-stage CL to key hyperparameter changes in (\texttt{Jigsaw}).}
    \label{fig:rq3-jigsaw}
\end{figure*}

Our method achieves highly consistent results w.r.t. fairness and performance compared to the baseline methods. 
Figure~\ref{fig:rq3-jigsaw} visualizes the model performance and EO gaps of two-stage CL under different hyperparameter settings when $ \lambda \in \{0.0, 2.0\}$ in the \texttt{jigsaw} dataset.

\subsection{Visualization of the BERT Embeddings using Different Objectives}
\label{app-subsec:tsne-visualization}

\begin{figure*}
     \centering
     \begin{subfigure}[b]{0.48\textwidth}
         \centering
         \includegraphics[width=\textwidth]{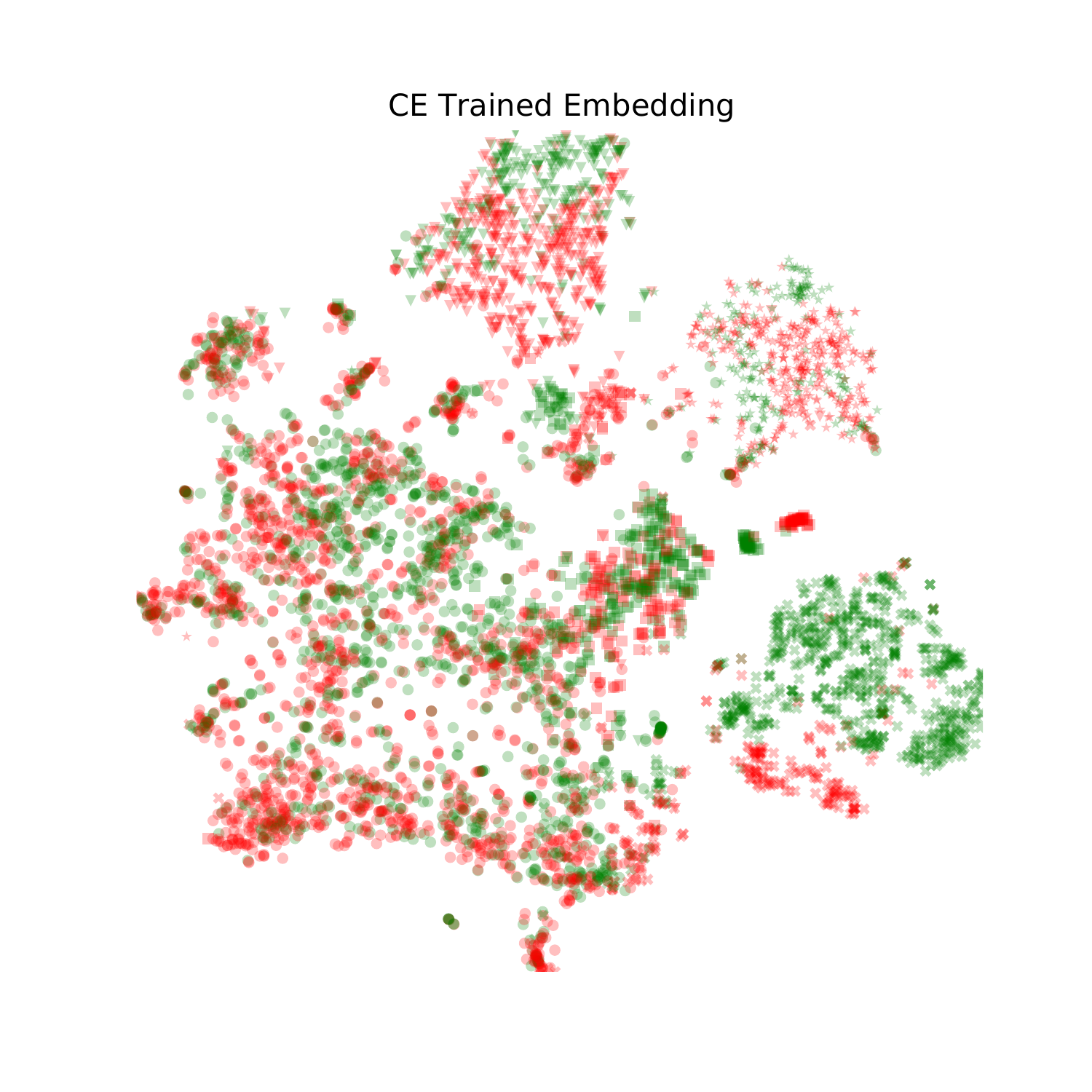}
     \end{subfigure}
     \hfill
     \begin{subfigure}[b]{0.48\textwidth}
         \centering
         \includegraphics[width=\textwidth]{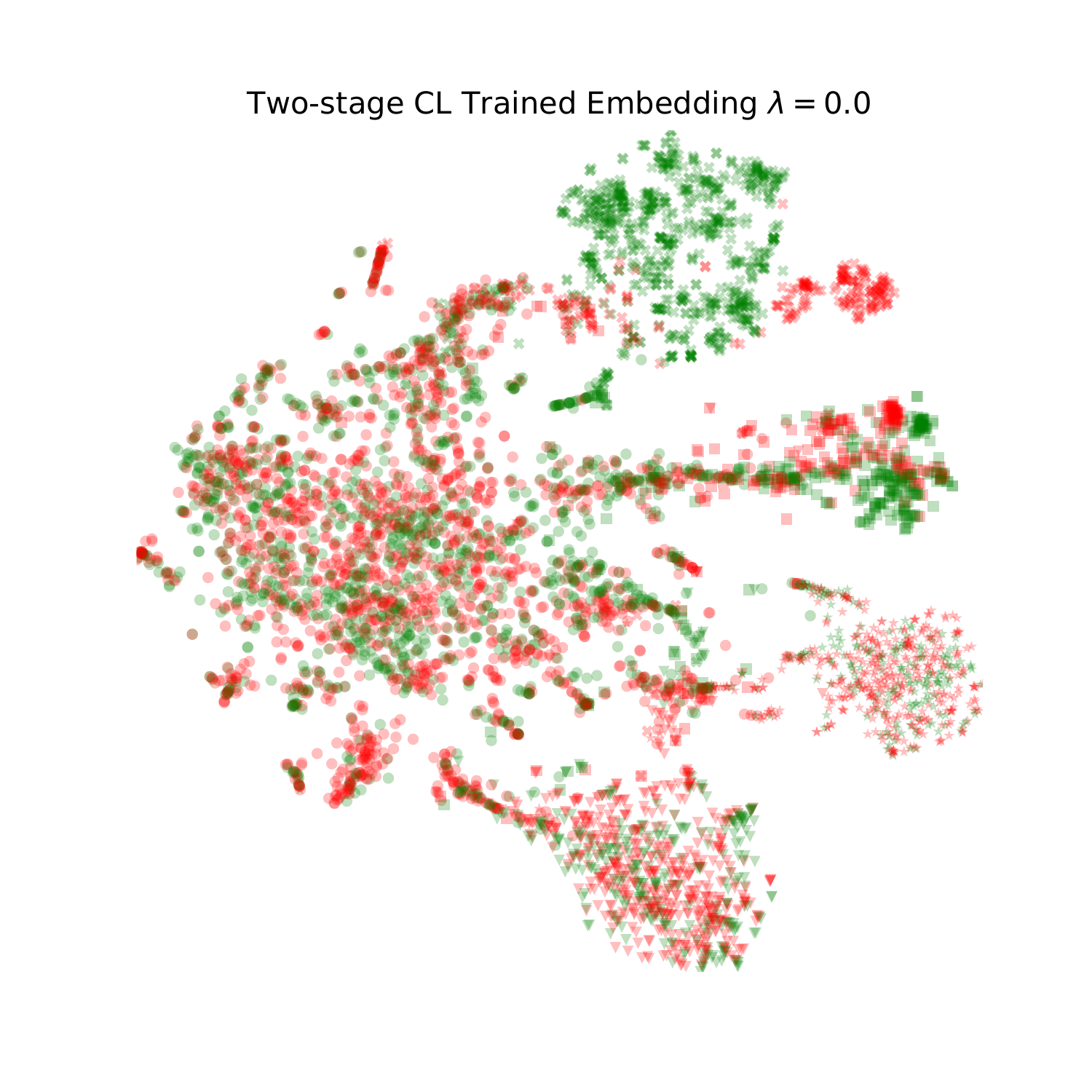}
     \end{subfigure}
     \begin{subfigure}[b]{0.48\textwidth}
         \centering
         \includegraphics[width=\textwidth]{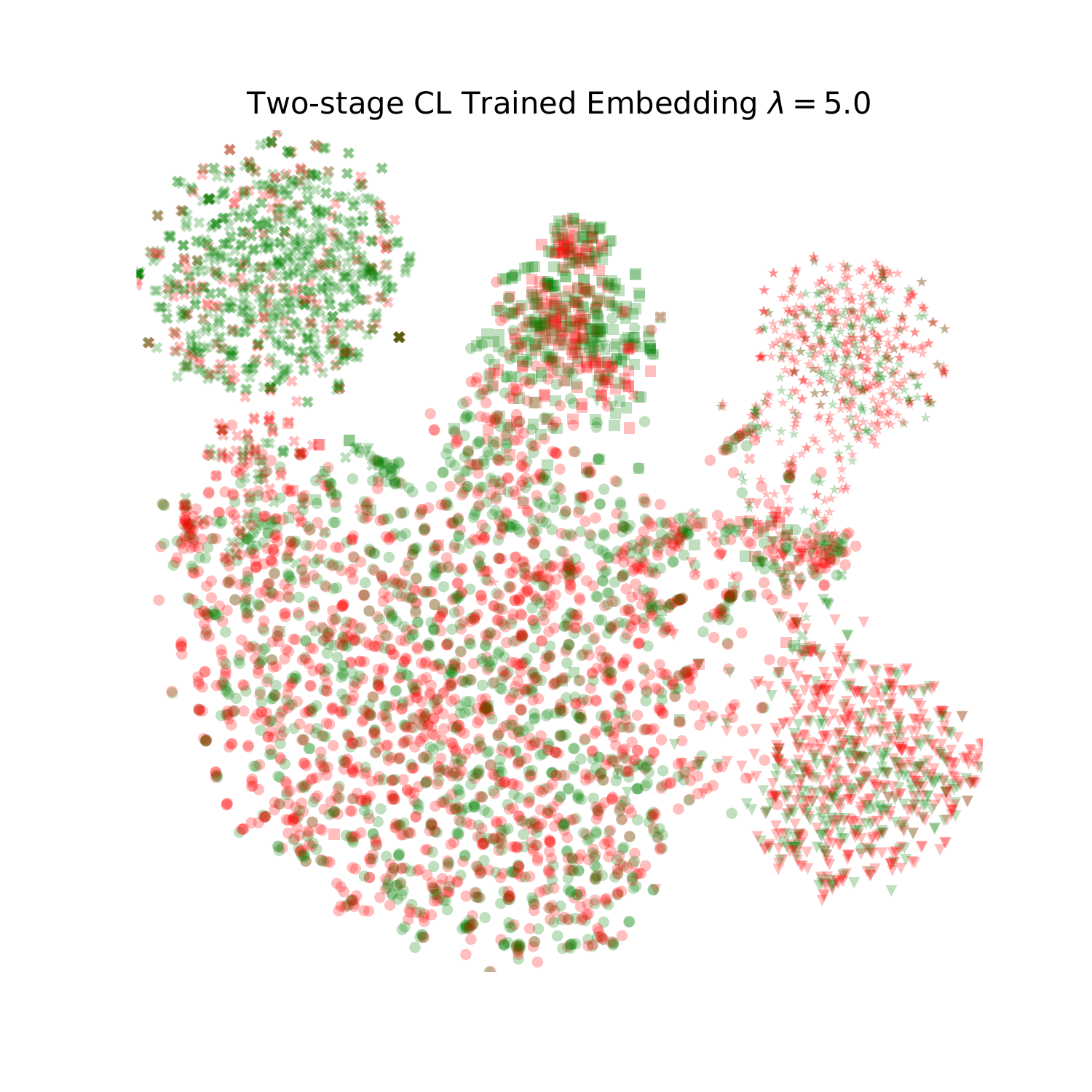}
     \end{subfigure}
     \caption{T-SNE visualization of text embeddings using different training objectives (zoom in for better visualization) in the \texttt{biasbios} dataset. Different colors indicate different sensitive attributes (e.g., red for males and green for females), and different markers indicate different classes. CE-trained and CL-trained embedding capture the class information well (points with the same markers form their own clusters). However, points with the same sensitive attributes within the same class are more likely to form small clusters. When we introduce $L_{\text{CS-InfoNCE}}$, those points tend to be more aligned.}
     \label{fig:tsne}
\end{figure*}

In Figure~\ref{fig:tsne}, we show the T-SNE visualization~\citep{van2008visualizing} of text embeddings learned with different training objectives. 
We can see that both CE-trained and CL-trained embeddings capture the class information well (points with the same markers form their own clusters). However, points with the same sensitive attributes (the same colors) within the same class are more likely to form small clusters. When we introduce $L_{\text{CS-InfoNCE}}$, those points tend to be more aligned.

\subsection{How Different Pretrained Text Encoders affects the Performance of INLP?}
\label{app-subsec:inlp}

To provide the clearest comparison between our proposed methods and the baselines, we used the best settings for the baseline methods we could attain. Nonetheless, we observed that the performance of INLP was highly sensitive to the encoder training settings, which could be an important practical consideration for practitioners selecting between different ways of improving model fairness. Figure~\ref{fig:inlp-comparison} compares the performance and fairness of INLP using different encoder pre-training strategies. We see that in both datasets, the classification and fairness performance of INLP changes drastically even with the same values of trade-off parameter. 
Even training with the same objectives (CE loss), the text encoders obtained in different epochs after convergence greatly affect its performance. 
For example, the CE-trained encoder obtained in the last epoch of training nearly shows no effects on bias mitigation.
If we do not train the text encoder using our datasets and directly use the parameters of the \texttt{bert-base-uncased} (this is the experimental setting of the previous work~\citep{ravfogel-etal-2020-null}), the model performances drastically decrease as the training iterations of INLP increase. Lastly, INLP does not perform well when using supervised contrastive loss to train the text encoder.
In comparison, our methods are more robust to hyperparameter changes.

\begin{figure*}
     \centering
     \includegraphics[width=\textwidth]{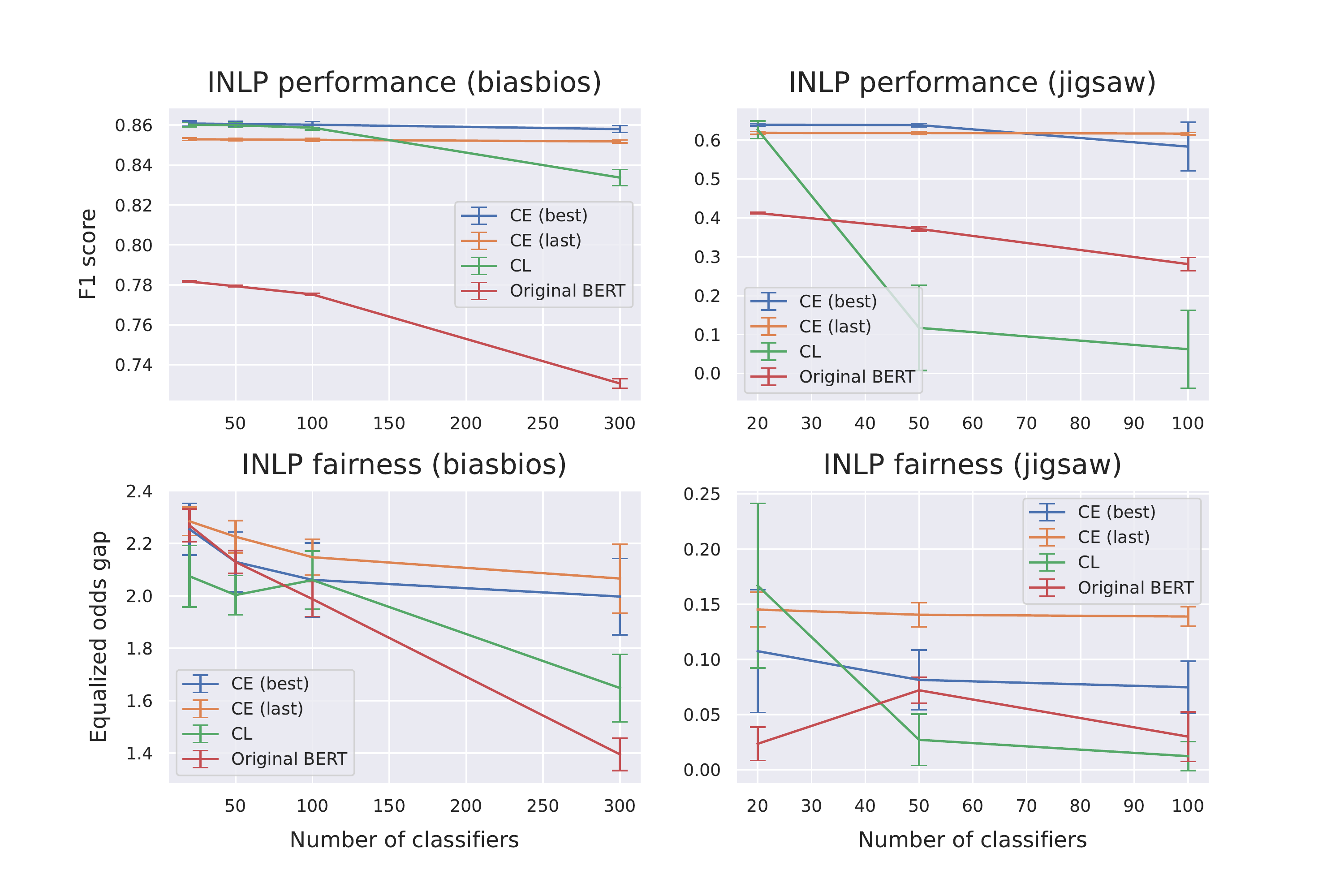}
     \caption{Comparison of INLP performance and fairness under different pretrained encoders. 
     CE (best) indicates that we train the text encoder using CE loss and save the encoder in the epoch that achieves the best validation loss for INLP.
     CE (last) indicates that we train the text encoder using CE loss and save the encoder in the last epoch for INLP.
     CL indicates that we train the text encoder using supervised contrastive loss. 
     Original BERT indicates that we use the \texttt{bert-based-uncased} as the text encoder. 
    }
     \label{fig:inlp-comparison}
\end{figure*}

\end{document}